\newtheorem{proposition}{Proposition}
\newtheorem{definition}{Definition}
\newtheorem{lemma}{Lemma}
\newcommand{\hid}[2]{h^{(#1)}_{#2}}
\newcommand{\cid}[1]{c_{#1}}
\newcommand{\zid}[1]{z_{#1}}
\newcommand{\jid}[1]{J^{(#1)}}
\newcommand{\jhd}[1]{{\hat{J}}^{(#1)}}
\newcommand{\jidd}[2]{J^{(#1)}_{#2}}
\title{Theoretical and Experimental Analysis on the Generalizability of Distribution Regression Network}
\author{%
	Connie Kou\\
	School of Computing, National University of Singapore \\
	Bioinformatics Institute, A*STAR Singapore \\
	\texttt{koukl@comp.nus.edu.sg} \\
	\And
	Hwee Kuan Lee \\
	School of Computing, National University of Singapore \\
	Bioinformatics Institute, A*STAR Singapore \\
	\texttt{leehk@bii.a-star.edu.sg} \\
	\And
	Jorge Sanz \\
	School of Business, National University of Singapore \\
	School of Computing, National University of Singapore \\
	\texttt{jorges@nus.edu.sg} \\
	\And
	Teck Khim Ng \\
	School of Computing, National University of Singapore \\
	\texttt{ngtk@comp.nus.edu.sg} \\
}
\begin{document}

\maketitle

\begin{abstract}
There is emerging interest in performing regression between distributions. In contrast to prediction on single instances, these machine learning methods can be useful for population-based studies or on problems that are inherently statistical in nature. The recently proposed distribution regression network (DRN) \citep{KOU2018} has shown superior performance for the distribution-to-distribution regression task compared to conventional neural networks. However, in \citet{KOU2018} and some other works on distribution regression, there is a lack of comprehensive comparative study on both theoretical basis and generalization abilities of the methods. We derive some mathematical properties of DRN and qualitatively compare it to conventional neural networks. We also perform comprehensive experiments to study the generalizability of distribution regression models, by studying their robustness to limited training data, data sampling noise and task difficulty. DRN consistently outperforms conventional neural networks, requiring fewer training data and maintaining robust performance with noise. Furthermore, the theoretical properties of DRN can be used to provide some explanation on the ability of DRN to achieve better generalization performance than conventional neural networks.
\end{abstract}

\section{Introduction}
There has been emerging interest in perform regression on complex inputs such as probability distributions. Performing prediction on distributions has many important applications. Many real-world systems are driven by stochastic processes. For instance, the Fokker-Planck equation \citep{risken1996fokker} has been used to model a time-varying distribution, with applications such as astrophysics \citep{noble2011modeling}, biological physics \citep{guerin2011bidirectional} and weather forecasting \citep{palmer2000predicting}. Extrapolating a time-varying distribution has also been used to train a classifier where the data distribution drifts over time \citep{lampert2015predicting}.

A recently proposed distribution regression model, distribution regression network (DRN) \citep{KOU2018}, outperforms conventional neural networks by proposing a novel representation of encoding an entire distribution in a single network node. On the datasets used by \citet{KOU2018}, DRN achieves better accuracies with 500 times fewer parameters compared to the multilayer perceptron (MLP) and one-dimensional convolutional neural network (CNN). However, in \citet{KOU2018} and other distribution regression methods \citep{oliva2013distribution,oliva2015fast}, there is a lack of comprehensive comparative study on both theoretical basis and generalization abilities of the methods. 

In this work, we derive some mathematical properties of DRN and qualitatively compare it to conventional neural networks. We also performed comprehensive experiments to study the generalizability of distribution regression models, by studying their robustness to limited training data, data sampling noise and task difficulty. DRN consistently outperforms conventional neural networks, requiring two to five times fewer training data to achieve similar generalization performance. With increasing data sampling noise, DRN's performance remains robust whereas the neural network models saw more drastic decrease in test accuracy. Furthermore, the theoretical properties of DRN can be used to provide insights on the ability of DRN to achieve better generalization performance than conventional neural networks.

\section{Related work}
Various machine learning methods have been proposed for distribution data, ranging from
distribution-to-real regression \citep{poczos2013distribution,oliva2014fast} to
distribution-to-distribution regression~\citep{oliva2015fast,oliva2013distribution}. The
Triple-Basis Estimator (3BE) has been proposed for function-to-function regression. It uses basis
representations of functions and learns a mapping from Random Kitchen Sink basis features
\citep{oliva2015fast}. 3BE shows improved accuracies for distribution regression compared to an
instance-based learning method \citep{oliva2013distribution}. More recently,
\citet{KOU2018}  proposed the distribution regression network which extends the neural network
representation such that an entire distribution is encoded in a single node. With this compact
representation, DRN showed better accuracies while using much fewer parameters than conventional
neural networks and 3BE \citep{oliva2015fast}.

For predicting the future state of a time-varying distribution,
\citet{lampert2015predicting}  proposed Extrapolating the Distribution Dynamics (EDD) which predicts
the future state of a time-varying distribution given samples from previous time steps. EDD uses the
reproducing kernel Hilbert space (RKHS) embedding of distributions and learns a linear mapping to
model how the distribution evolves between adjacent time steps. EDD is shown to work for a few
variants of synthetic data, but the performance deteriorates for tasks where the dynamics is
non-linear.

\section{Distribution Regression Network} \label{sect:DRN}
For the distribution regression task, the dataset consists of $M$ data points $\mathcal{D} = \{(X_1^1,\cdots,X_1^K,Y_1), \cdots,
(X_M^1,\cdots,X_M^K,Y_M)\}$ where $X_i^k$ and $Y_i$ are univariate continuous distributions with
compact support. The regression task is to learn the function $f$ which maps the input distributions
to output distribution: $Y_i=f(X_i^1,\cdots,X_i^K)$ on unseen test data.

\begin{figure}
\centering
\begin{subfigure}[b]{0.38\columnwidth}
\includegraphics[width=\columnwidth]{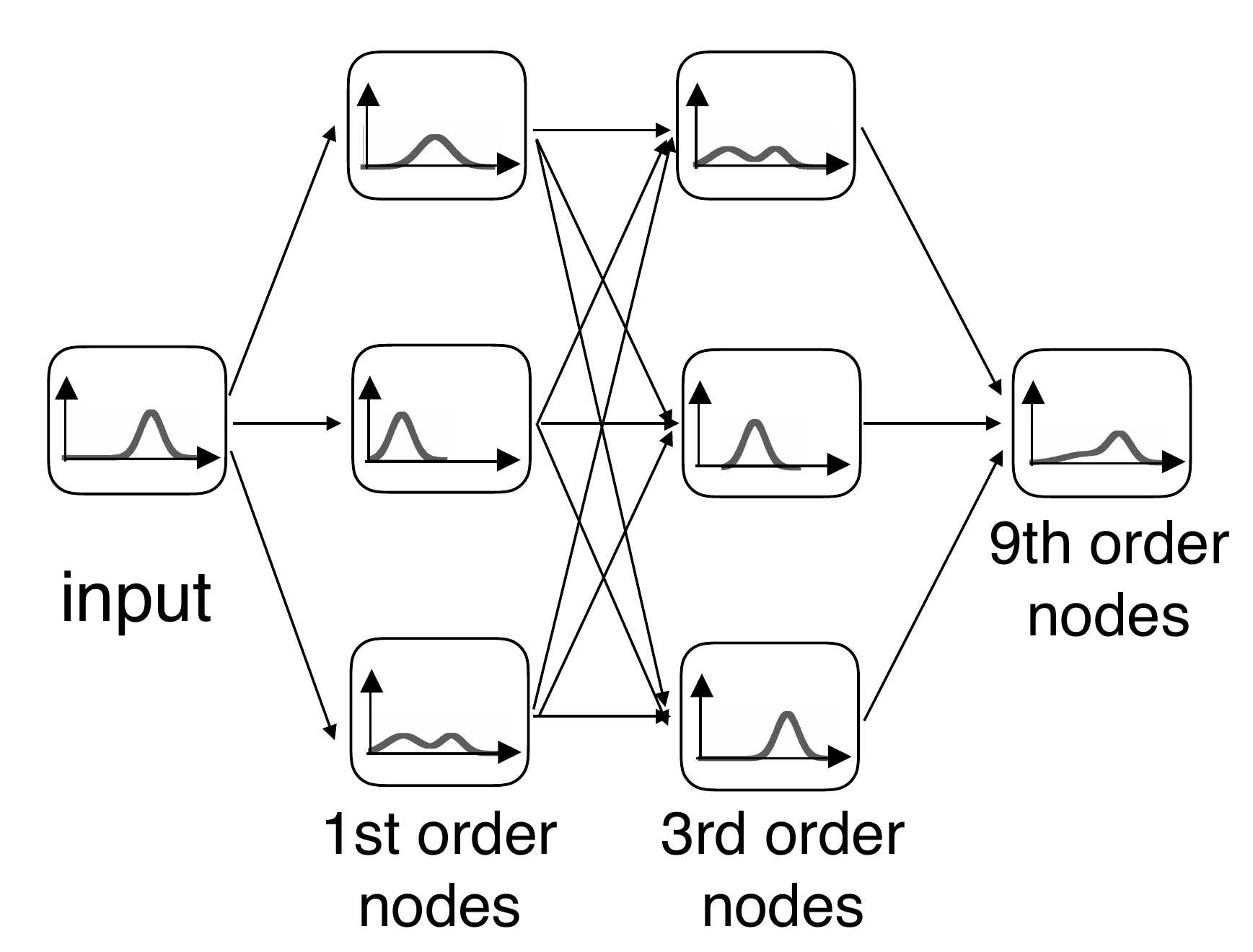}
\caption{}
\label{fig:drnhiddenhidden}
\end{subfigure}
\begin{subfigure}[b]{0.55\columnwidth}
\includegraphics[width=\columnwidth]{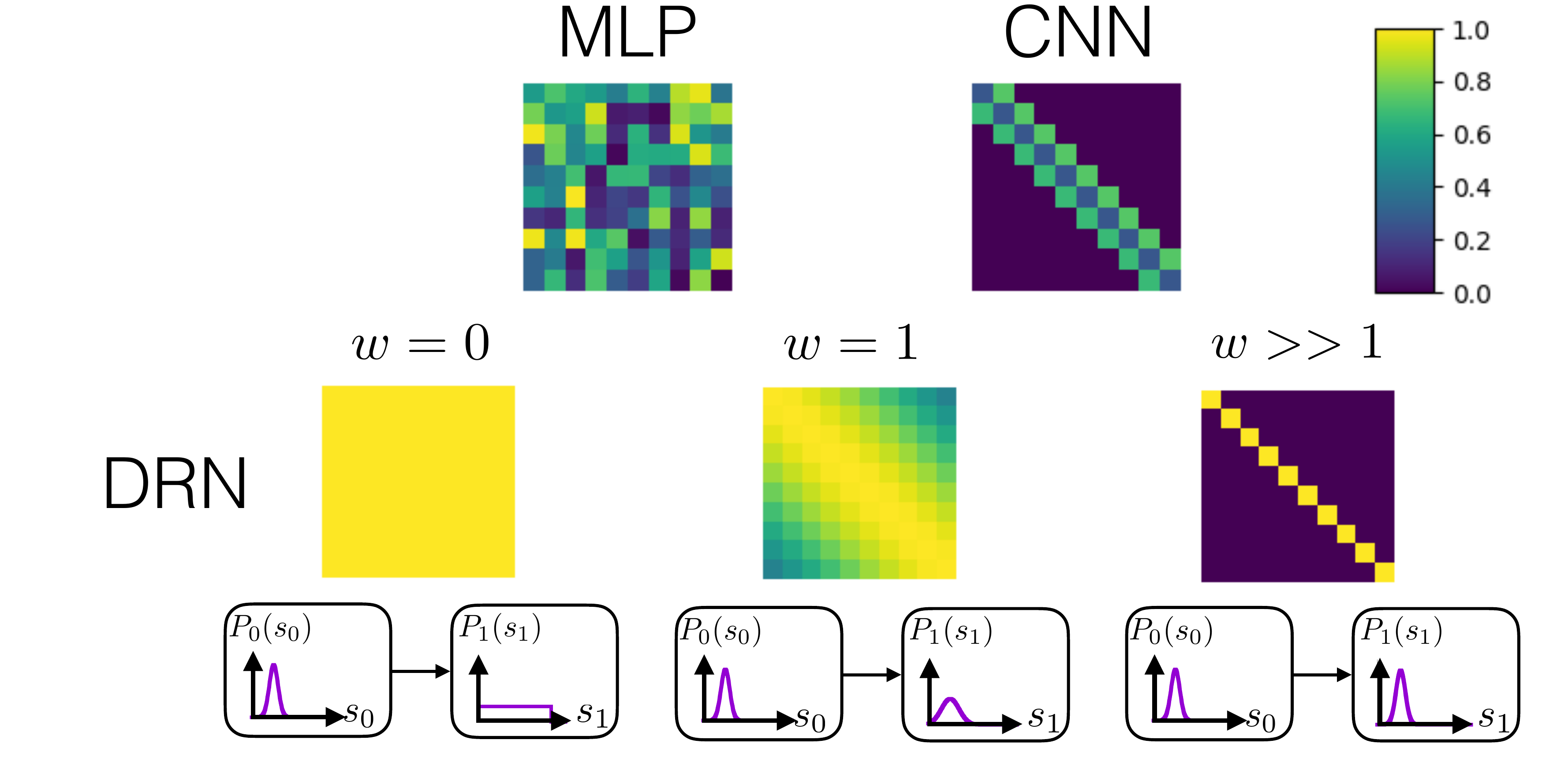}
\caption{}
\label{fig:visualTw}
\end{subfigure}
\caption{(a) DRN performs distribution regression by encoding each node with an entire distribution. Using multiple hidden layers, the DRN network shown has an output up with up to $9^{th}$ order
transformation. (b) Visualization of the weight matrices for DRN, MLP and CNN where $q=10$. For DRN, we show different values of weight, along with the propagation
behaviors.}
\end{figure}

\citet{KOU2018} proposed the distribution regression network~(DRN) for the task of
distribution-to-distribution regression. We give a brief description of DRN following the notations
of \citet{KOU2018}. Figure \ref{fig:drnhiddenhidden} shows that DRN consists of
multiple fully-connected layers connecting the data input to the output in a feedforward manner,
where each connection has a real-valued weight. The novelty of DRN is that each node in the network
encodes a univariate probability distribution. The distribution at each node is computed using the
distributions of the incoming nodes, the weights and the bias parameters. Let $P_k^{(l)}$
represent the probability density function (pdf) of the $k^{\text{th}}$ node in the
$l^{\text{th}}$ layer where $P_k^{(l)}(s_k^{(l)})$ is the density of the distribution when the node
variable is $s_k^{(l)}$. The unnormalized distribution $\tilde{P}_k^{(l)}$ is computed by
marginalizing over the product of the unnormalized conditional probability and the incoming
probabilities.

\begin{align}
\label{eq:margin}
\tilde{P}_k^{(l)}\left(s_k^{(l)}\right) = \int_{\mathbf{s}^{(l-1)}} 
\tilde{Q}(s_k^{(l)}|\mathbf{s}^{(l-1)}) P_1^{(l-1)}\left(s_1^{(l-1)}\right)\cdots
P_n^{(l-1)}\left(s_n^{(l-1)}\right)  \,d\mathbf{s}^{(l-1)},
\end{align}

where the shorthand $\mathbf{s}^{(l-1)} =s_1^{(l-1)}  \cdots s_n^{(l-1)}$ is used for the incoming
node variables and $\tilde{Q}(s_k^{(l)}|\mathbf{s}^{(l-1)}) = \exp[
-E(s_k^{(l)}|s_1^{(l-1)},\cdots,s_n^{(l-1)})]$. The unnormalized conditional probability
has the form of the Boltzmann distribution, where $E$ is the energy for a given set of node
variables,
\begin{align}
\label{eq:energy}
E\left(s_k^{(l)}|\mathbf{s}^{(l-1)} \right) 
= \sum_i^n w_{ki}^{(l)}
\left( \frac{s_k^{(l)}-s_i^{(l-1)}}{\Delta}\right)^2 
+ b_{q,k}^{(l)}\left( \frac{s_k^{(l)}-\lambda_{q,k}^{(l)}}{\Delta} \right)^2 +
b_{a,k}^{(l)} \left|\frac{s_k^{(l)}-\lambda_{a,k}^{(l)}}{\Delta} \right|,
\end{align}
where $w_{ki}^{(l)}$ is the weight connecting the $i^\text{th}$ node in layer $l-1$ to the
$k^{th}$ node in layer $l$.  $b_{q,k}^{(l)}$ and $b_{a,k}^{(l)}$ are the quadratic and absolute bias
terms acting on positions $\lambda_{q,k}^{(l)}$ and $\lambda_{a,k}^{(l)}$ respectively. $\Delta$ is
the support length of the distribution. After obtaining the unnormalized probability, the
distribution from Eq. (\ref{eq:margin}) is normalized. Forward propagation is performed layer-wise
to obtain the output prediction. The DRN propagation model, with the Boltzmann distribution, is
motivated by work on spin models in statistical physics \citep{katsura1962statistical,
lee2002monte}. The distribution regression task is general and in this paper we extend it to the task of forward
prediction on a time-varying distribution: Given a series of distributions with $T$ equally-spaced
time steps, $X^{(1)}, X^{(2)},\cdots, X^{(T)}$, we want to predict $X^{(T+k)}$, i.e.~the distribution
at $k$ time steps later. The input at each time step may consist of more than one distribution. To
use DRN for the time series distribution regression, the input distributions for all time steps are
concatenated at the input layer. The DRN framework is flexible and can be extended to
architectures beyond feedforward networks. Since we are addressing the task of time series
distribution regression, we also implemented a recurrent extension of DRN, which we call recurrent
distribution regression network (RDRN). The extension is straightforward and we provide details of
the architecture in the supplementary material.

Following \citet{KOU2018}, the cost function for the prediction task is measured by the
Jensen-Shannon divergence \citep{lin1991divergence} between the label and
predicted distributions. We adopt the same parameter initialization method as
\citet{KOU2018}, where the network weights and bias are randomly initialized following a uniform
distribution and the bias positions are uniformly sampled from the support length of the data
distribution. The integrals in Eq. (\ref{eq:margin}) are performed numerically. Each continuous
distribution density function is discretized into $q$ bins, resulting in a discrete
probability mass function (i.e. a $q$-dimensional vector that sums to one).

\section{Properties of DRN}\label{sect:DRNanalysis}
DRN is able to perform transformations such as peak spreading and peak splitting, as discussed in detail in \citet{KOU2018}. In this section, we provide further theoretical analysis on the functional form of DRN propagation which
gives more insight to DRN's generalization abilities. By
expressing the integral in Eq. (\ref{eq:margin}) with summation and the distribution at each node as
a discretized $q$-length vector, we can express the output at a node in vector form,
$
\tilde{p}_0 = B_0 \circ (T_{w_1} \cdot p_1) \circ (T_{w_2} \cdot p_2)
\circ \cdots \circ (T_{w_n}\cdot p_n) = B_0 \circ \breve{\prod}^n_{i=1}
T_{w_i} \cdot p_i,
$
where $\breve{\prod}$ is a symbol for Hadamard products and upon normalization, $p_0 =
\tilde{p}_0 / | \tilde{p}_0|$. $\circ$ is the element wise Hadamard product operator and $\cdot$ is
the matrix multiplication operator. $B_0$ is a vector representing the bias term 
whose components are given by
$
(B_0)_i = \exp\left( 
-b_q \left(\frac{s_i - \lambda_q}{\Delta}\right)^2  
-b_a \left|\frac{s_i - \lambda_a}{\Delta}\right| \right)
$.
$T_{w_i}$ is a symmetric  $q$ by $q$ transformation matrix corresponding to the 
connections in DRN with elements
$
(T_{w_i})_{qr} = \exp\left( - w_i 
\left(\frac{s_q - s_r}{\Delta}\right)^2 \right)
$.
With the Boltzmann distribution, under positive weight $w$, the matrix $T_{w_i}$ acts as a Gaussian
filter on the input distribution, where $w$ controls the spread, as shown in Figure
\ref{fig:visualTw}. In the following, we present some propositions concerning DRN,
where their corresponding proofs are in the supplementary material.

\begin{proposition}
A node connecting to a target node with zero weight $w=0$ has no effect on the
activation of the target node.
\end{proposition}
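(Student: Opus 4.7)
The plan is to show that when the weight $w_{ki}^{(l)}$ from node $i$ to the target node $k$ vanishes, the factor of $P_i^{(l-1)}$ drops out of the formula for $\tilde{P}_k^{(l)}$ up to a multiplicative constant, which then disappears upon normalization. I would carry this out using either the integral form in Eq.~(\ref{eq:margin}) or the equivalent vector form given just before the proposition; the two arguments are essentially identical, so I would pick whichever reads most cleanly.

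First, working from Eq.~(\ref{eq:energy}), I would observe that setting $w_{ki}^{(l)}=0$ removes the $i$-th summand, so the energy $E(s_k^{(l)}|\mathbf{s}^{(l-1)})$ no longer depends on $s_i^{(l-1)}$. Consequently the Boltzmann factor $\tilde{Q}(s_k^{(l)}|\mathbf{s}^{(l-1)})$ factors as a function of the remaining variables only. Substituting this into Eq.~(\ref{eq:margin}), the integral over $s_i^{(l-1)}$ separates out as $\int P_i^{(l-1)}(s_i^{(l-1)})\,ds_i^{(l-1)}=1$, since $P_i^{(l-1)}$ is a normalized probability density. Hence $\tilde{P}_k^{(l)}$ is identical to what one would obtain by deleting node $i$ from the incoming layer.

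Alternatively, in vector form, $w_i=0$ makes $(T_{w_i})_{qr}=\exp(0)=1$ for every pair $(q,r)$, so $T_{w_i}$ is the all-ones matrix and $T_{w_i}\cdot p_i$ is the constant vector whose entries equal $\sum_r (p_i)_r=1$. The Hadamard product with this all-ones vector is the identity, so $\tilde{p}_0$ is unchanged by removing the $i$-th factor, and normalization preserves this. Either way, $P_k^{(l)}=\tilde{P}_k^{(l)}/\lvert\tilde{P}_k^{(l)}\rvert$ does not depend on $P_i^{(l-1)}$.

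I do not anticipate a real obstacle here; the only subtlety is being careful that the constant factored out is indeed $1$ (it is, because the incoming densities are normalized before this step of forward propagation), so that the statement holds even before normalizing the output. If one preferred, the weaker observation that any multiplicative constant would be absorbed by the final normalization suffices.
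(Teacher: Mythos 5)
Your proposal is correct, and your second (vector-form) argument --- $T_{w_i=0}$ is the all-ones matrix, so $T_{w_i}\cdot p_i = (1,\dots,1)^t$ and the Hadamard factor drops out --- is exactly the proof given in the paper. Your first argument via the integral form of Eq.~(1) is just the continuous counterpart of the same observation, so there is nothing substantively different to compare.
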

Similar to conventional neural networks, this is a mechanism for which DRN
can learn to ignore spurious nodes by setting their weights to zero or near zero.
\begin{proposition}
For a node connecting to a target node with sufficiently large positive weight $w \to \infty$, the
transformation matrix $T_w \to I$. 
\end{proposition}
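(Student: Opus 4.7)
The plan is a direct entrywise limit on the definition of $T_w$, splitting into diagonal and off-diagonal cases. Reading off the formula $(T_w)_{ij}=\exp\bigl(-w\,((s_i-s_j)/\Delta)^2\bigr)$, one sees that $w$ enters linearly through a single prefactor, so the limit $w\to\infty$ reduces to the behavior of $e^{-wc}$ for a nonnegative constant $c$ that depends only on the fixed bin positions $s_i$.

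On the diagonal, $s_i-s_j=0$, so the exponent is identically zero and $(T_w)_{ii}=1$ independent of $w$. Off the diagonal, the uniform $q$-bin discretization of the support described in Section~\ref{sect:DRN} guarantees that $s_i\neq s_j$ whenever $i\neq j$, so $((s_i-s_j)/\Delta)^2$ is a strictly positive constant in $w$, and consequently $(T_w)_{ij}\to 0$ as $w\to\infty$. Combining the two cases gives entrywise convergence $T_w\to I$, which on a fixed finite-dimensional matrix is equivalent to convergence in any matrix norm.

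There is no substantive obstacle here: the argument is essentially a one-line application of the decay of $e^{-wc}$ for $c>0$. The only point that needs to be stated explicitly is that the discretization produces distinct bin centers, which is immediate from the setup. If desired, I would record the quantitative bound $\max_{i\neq j}(T_w)_{ij}\le \exp\bigl(-w\,\delta_{\min}^2/\Delta^2\bigr)$, where $\delta_{\min}$ is the minimum spacing between bin centers, to make the rate of convergence explicit and to justify the interpretation (consistent with Figure~\ref{fig:visualTw}) that in this limit the Gaussian filter collapses to a delta, so that $T_w\cdot p_i\to p_i$ and the incoming distribution is propagated unchanged through the connection prior to the Hadamard combination with the other incoming factors and the bias.
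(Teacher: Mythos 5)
Your proof is correct and follows the same (indeed, the only natural) route as the paper, which simply asserts that $T_{w\to\infty}\cdot p_1 = I\cdot p_1 = p_1$ without writing out the entrywise limit; you supply the diagonal/off-diagonal case split and the observation that distinct bin centers make the off-diagonal exponents strictly positive, which is exactly the detail the paper leaves implicit. The quantitative bound $\max_{i\neq j}(T_w)_{ij}\le \exp\bigl(-w\,\delta_{\min}^2/\Delta^2\bigr)$ is a harmless and useful addition.
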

The consequence is that the identity mapping from one node to another can be realized.
\begin{proposition}
Output of DRN is invariant to normalization of all hidden layers of DRN.
\label{prop:hidden_norm}
\end{proposition}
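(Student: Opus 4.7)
The plan is to use the multilinearity of the DRN propagation rule in the incoming distributions and then do a simple induction on layers, with the final normalization at the output absorbing all the accumulated scalar factors.

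First I would observe from Eq.~(\ref{eq:margin}) that the map $(P_1^{(l-1)},\dots,P_n^{(l-1)}) \mapsto \tilde{P}_k^{(l)}$ is multilinear: because each $P_i^{(l-1)}$ enters linearly inside the integral and the kernel $\tilde{Q}$ does not depend on the incoming distributions, replacing $P_i^{(l-1)}$ by $c_i P_i^{(l-1)}$ for arbitrary positive scalars $c_i$ multiplies $\tilde{P}_k^{(l)}$ by $\prod_i c_i$. The same fact is visible in the vector form $\tilde{p}_0 = B_0 \circ \breve{\prod}_{i=1}^n T_{w_i} \cdot p_i$, since scaling any $p_i$ pulls a scalar out of the Hadamard product. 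Crucially, the scalar that emerges is independent of the output variable $s_k^{(l)}$.

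Next I would set up the induction. Let $P_k^{(l)}$ denote the distribution at node $(l,k)$ when every hidden layer is normalized as prescribed, and let $\hat P_k^{(l)}$ denote the distribution obtained when the normalization step is omitted at every hidden layer. I would prove by induction on $l$ that $\hat P_k^{(l)} = \alpha_k^{(l)} P_k^{(l)}$ for some positive scalar $\alpha_k^{(l)}$ depending only on the layer index, the node, and the network parameters (not on $s_k^{(l)}$). The base case at the input layer is trivial with $\alpha_k^{(0)}=1$, since inputs are always proper normalized distributions. For the inductive step, applying Eq.~(\ref{eq:margin}) to the $\hat P_i^{(l-1)} = \alpha_i^{(l-1)} P_i^{(l-1)}$ and using multilinearity yields
\begin{align*}
\hat{\tilde P}_k^{(l)}(s_k^{(l)}) = \Bigl(\prod_{i=1}^n \alpha_i^{(l-1)}\Bigr)\,\tilde P_k^{(l)}(s_k^{(l)}),
\end{align*}
so $\hat{\tilde P}_k^{(l)}$ differs from $\tilde P_k^{(l)}$ only by a global constant, establishing the claim with $\alpha_k^{(l)} = \prod_i \alpha_i^{(l-1)}$ (and no normalization having been applied in the hidden case).

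Finally, at the output layer the network normalizes the unnormalized density, and dividing $\hat{\tilde P}_k^{\text{out}}$ by its total mass eliminates the scalar factor $\alpha_k^{\text{out}}$, yielding exactly the same normalized output as in the standard propagation. I do not expect any real obstacle here; the only thing to be careful about is that the kernel $\tilde{Q}$ in Eq.~(\ref{eq:margin}) is unnormalized and depends only on the weights, biases, and node variables, so the scalars factored out at each step are genuinely constants with respect to $s_k^{(l)}$ rather than functions of it. Once that observation is made explicit, the multilinearity argument and the final normalization close the proof.
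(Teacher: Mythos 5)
Your proof is correct and follows essentially the same route as the paper's: both exploit that scaling any incoming distribution pulls a constant scalar out of the multilinear propagation rule (a scalar factoring through the Hadamard product), accumulate these constants layer by layer, and let the final output normalization absorb them. The only quibble is that your recursion $\alpha_k^{(l)} = \prod_i \alpha_i^{(l-1)}$ omits the factor $|\tilde P_k^{(l)}|$ contributed by the normalization actually performed in the reference network (the paper tracks this explicitly as $z^{(l)}$), but since only the positivity and $s_k^{(l)}$-independence of $\alpha_k^{(l)}$ matter, this does not affect the validity of the argument.
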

As an extension to Proposition \ref{prop:hidden_norm}, it can be shown that the output of DRN is
invariant to scaling of the hidden layers. Therefore arbitrary scaling can be applied in the layers
to control numerical stability. These normalization can be done dynamically during the computation.
In this paper, we found that normalization of all layers is sufficient to provide the required
numerical stability and precision for all our datasets.

\begin{definition}
A node in DRN is said to be an order $n$ node when it is connected with non-zero weights from $n$
incoming nodes in the previous layer.
\label{def:ordern}
\end{definition}
\begin{lemma}
For an order $n$ node, components of $\tilde{p}_{0}$ 
(which we denote as $\tilde{p}_{0i}$),
follow a power law of
$n$th order cross terms of the components of
connecting nodes.
\end{lemma}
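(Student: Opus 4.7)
The plan is to unfold the vector expression for $\tilde{p}_0$ componentwise and then expand the product of sums that arises. Starting from
$\tilde{p}_0 = B_0 \circ (T_{w_1}\cdot p_1) \circ \cdots \circ (T_{w_n}\cdot p_n)$, I would take the $i$th component, using the fact that $\circ$ is entrywise, to write
\begin{equation*}
\tilde{p}_{0i} = (B_0)_i \prod_{k=1}^{n} (T_{w_k}\cdot p_k)_i = (B_0)_i \prod_{k=1}^{n} \sum_{j_k=1}^{q} (T_{w_k})_{i j_k} (p_k)_{j_k}.
\end{equation*}
The node being of order $n$ (Definition~\ref{def:ordern}) is exactly what guarantees the product has $n$ factors, since zero-weight incoming nodes contribute trivially by Proposition~1 and can be dropped.

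Next I would apply the distributive law to turn the product of $n$ sums into a single $n$-fold sum over the multi-index $(j_1,\dots,j_n)$, obtaining
\begin{equation*}
\tilde{p}_{0i} = (B_0)_i \sum_{j_1,\dots,j_n=1}^{q} \Big(\prod_{k=1}^{n} (T_{w_k})_{i j_k}\Big)\, (p_1)_{j_1}(p_2)_{j_2}\cdots (p_n)_{j_n}.
\end{equation*}
This is manifestly a weighted sum of degree-$n$ monomials, where each monomial consists of one component from each of the $n$ incoming distributions; the coefficient $(B_0)_i \prod_k (T_{w_k})_{i j_k}$ depends only on the weights, biases and bin positions, not on the input distribution values. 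This is the precise sense of ``$n$th order cross terms of the components of connecting nodes'' claimed in the lemma, and it establishes the power-law dependence of degree exactly $n$ (no higher powers of a single $(p_k)_{j_k}$ appear, since each factor $(T_{w_k}\cdot p_k)_i$ is linear in $p_k$).

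The main obstacle here is less a mathematical difficulty than a matter of stating the claim precisely, since the lemma's phrasing (``follow a power law of $n$th order cross terms'') must be pinned down to ``polynomial of degree exactly $n$, multilinear across the $n$ input distributions.'' Once that interpretation is fixed, the proof reduces to the elementary expansion above, and the order-$n$ hypothesis is invoked solely to justify that exactly $n$ factors appear in the product, with Proposition~1 silently pruning any zero-weight connections.
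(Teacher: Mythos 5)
Your proof is correct and follows essentially the same route as the paper's: take the $i$th component of the Hadamard-product expression, write each factor $(T_{w_k}\cdot p_k)_i$ as a sum over $j_k$, and distribute the product of sums into a single $n$-fold sum separating coefficients $(B_0)_i\prod_k (T_{w_k})_{i,j_k}$ from the cross terms $(p_1)_{j_1}\cdots(p_n)_{j_n}$. The additional remarks you make about multilinearity and the role of Definition~\ref{def:ordern} are consistent with (and slightly more explicit than) the paper's two-line argument.
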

\begin{eqnarray} 
\tilde{p}_{0i} &=& 
(B_0)_i 
\sum_{j_1} \cdots \sum_{j_n}
\underbrace{\left[ (T_{w_1})_{i,j_1} \cdots (T_{w_n})_{i,j_n} \right]}_\text{coefficients}
\underbrace{\left[ (p_1)_{j_1} \cdots (p_n)_{j_n} \right]}_\text{cross terms} 
\label{eq:cross}
\end{eqnarray}

Writing in short hand notation, $\jid{1} = (j_1,\cdots j_n)$ where
the superscript indicates $J$ is the indices over the first layer.
Write
$\sum_{j_1} \cdots \sum_{j_n} = \sum_{\jid{1}}$ and consolidate the 
coefficients into a tensor, 
$\cid{i,\jid{1}} (w, B) = (B_0)_i (T_{w_1})_{i,j_1} \cdots (T_{w_n})_{i,j_n}$, and the cross terms into a tensor, 
$P_{\jid{1}} = (p_1)_{j_1} \cdots (p_n)_{j_n}$, where $w = (w_1,\cdots w_n)$.
Eq. (\ref{eq:cross}) can be written compactly as,
\begin{equation}
p_{0} = \sum_{\jid{1}} \cid{\jid{1}} (w, B)  P_{\jid{1}} /
\sum_{\jhd{1}} \zid{\jhd{1}} (w, B)  P_{\jhd{1}}
\label{eq:p2}
\end{equation}
where $p_{0} = (p_{01},p_{02},\cdots p_{0q})$, 
$\cid{\jid{1}} = (\cid{1,\jid{1}},\cdots \cid{q,\jid{1}})$, and $\zid{\jhd{1}}(w, B)  = \sum_i
\cid{i,\jid{1}}(w, B) $.

Using Proposition \ref{prop:hidden_norm}, we consider unnormalized hidden layers. The activations
for the $\alpha^{th}$ node of the first hidden layer $\hid{1}{\alpha}$ and second hidden layer
$\hid{2}{\alpha}$ are,
$
\hid{1}{\alpha} = \sum_{\jidd{1}{\alpha}} 
\cid{\jidd{1}{\alpha}} P_{\jidd{1}{\alpha}}
$, 

$
\hid{2}{\alpha} = \sum_{\jidd{2}{\alpha}} 
\cid{\jidd{2}{\alpha}} H_{\jidd{2}{\alpha}}
$ and 
$
H_{\jidd{2}{\beta}} =
\prod_{\beta=1}^{n_1}
\sum_{\jidd{1}{\beta}} 
\cid{j_{\beta},\jidd{1}{\beta}} P_{\jidd{1}{\beta}} 
$.
Detailed derivations of the above equations are given in the 
supplementary material. Each of the $P_{\jidd{1}{\beta}}$ consists of cross terms of the input
distributions to order
$n_0$ ($n_0$ is the number of input nodes).
$H_{\jidd{2}{\alpha}}$ is a product of
$n_1$ terms of $P_{\jidd{1}{\beta}}$'s, hence
$H_{\jidd{2}{\alpha}}$ will be cross terms of the input distributions to order
$n_1 \times n_0$. For a network of $L$ hidden layers with number of nodes, $n_1,n_2, \cdots n_L$,
the output consist of multiplications of the components of input distributions to
the power of $n_0 \times n_1 \cdots \times n_L$. In this way, DRN can fit high order functions
exponentially quickly by adding hidden layers. For instance, the network in Figure~\ref{fig:drnhiddenhidden} obtains a $9^{th}$ order transformation by using two hidden layers of only
3 nodes each.

\begin{proposition}
For a node of order $n$, in the limit of small weights $|w_\alpha| \ll 1$
for $\alpha=1,\cdots n$, the output activions, ${p}_{0}$
can be approximated as a fraction of two
linear combinations of the activations in the input nodes. 
\label{prop:linear}
\end{proposition}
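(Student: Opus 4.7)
The plan is to Taylor expand the transformation matrices $T_{w_\alpha}$ around zero weight and show that, to first order in the $w_\alpha$, each of the $n$-fold sums collapses to a single linear functional of one input distribution. Since each $T_{w_\alpha}$ has entries $(T_{w_\alpha})_{i,j_\alpha} = \exp(-w_\alpha ((s_i-s_{j_\alpha})/\Delta)^2)$, the small-weight limit gives $(T_{w_\alpha})_{i,j_\alpha} = 1 - w_\alpha \left(\frac{s_i-s_{j_\alpha}}{\Delta}\right)^2 + O(w_\alpha^2)$, so the product appearing in Eq.~(\ref{eq:cross}) becomes $\prod_{\alpha=1}^n (T_{w_\alpha})_{i,j_\alpha} = 1 - \sum_{\alpha=1}^n w_\alpha \left(\frac{s_i-s_{j_\alpha}}{\Delta}\right)^2 + O(|w|^2)$.

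Next I would substitute this expansion into Eq.~(\ref{eq:cross}) and use the key fact that each input is a probability vector, i.e.\ $\sum_{j_\beta}(p_\beta)_{j_\beta}=1$ for every $\beta$. Every term in the first-order expansion depends on only one of the $n$ summation indices, so marginalizing over the remaining $n-1$ indices yields 1, and the $n$-fold sum collapses: to leading order, $\tilde p_{0i} \approx (B_0)_i \Bigl[\, 1 - \sum_{\alpha=1}^n w_\alpha \sum_{j_\alpha} \left(\frac{s_i-s_{j_\alpha}}{\Delta}\right)^2 (p_\alpha)_{j_\alpha} \Bigr]$. Grouping the constant $1$ with the linear pieces and defining coefficients $A_{i,j_\alpha}^{(\alpha)} = (B_0)_i \bigl[\tfrac{1}{n} - w_\alpha((s_i-s_{j_\alpha})/\Delta)^2\bigr]$ shows that $\tilde p_{0i}$ is a linear combination of the components of the inputs $p_1,\dots,p_n$ with coefficients depending only on $w$, $B$, and the discretization grid.

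Finally I would apply the normalization step from Section~\ref{sect:DRN}, namely $p_{0i} = \tilde p_{0i}/\sum_k \tilde p_{0k}$. Both numerator and denominator are linear combinations of the same input components (the denominator just replaces the coefficient row indexed by $i$ with its column sum), so $p_{0i}$ is the ratio of two linear combinations of the input activations, as claimed. Strictly speaking I would also keep track of the remainder terms, noting that both numerator and denominator carry $O(|w|^2)$ corrections and that the denominator is bounded away from zero for sufficiently small $|w_\alpha|$, so the ratio is well-defined and the linearization is valid.

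The only real obstacle is bookkeeping: one must verify that the collapse of the $n$-fold sum happens exactly as advertised, which hinges on the fact that the first-order correction inside $\prod_\alpha (T_{w_\alpha})_{i,j_\alpha}$ is a \emph{sum} of single-index contributions rather than a product. Beyond this, the argument is purely a Taylor expansion followed by the probability-normalization identity, so no deeper structure is required.
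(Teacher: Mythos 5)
Your proposal is correct and follows essentially the same route as the paper's proof: a first-order Taylor expansion of $T_{w_\alpha}$ (the paper writes this as $T_{w_\alpha}=E+\mathcal{E}_{w_\alpha}+\mathcal{O}(w_\alpha^2)$ with $E$ the all-ones matrix), followed by dropping higher-order cross terms and normalizing, yielding the same ratio of linear combinations. Your component-wise derivation merely makes explicit the fact $\sum_{j_\beta}(p_\beta)_{j_\beta}=1$ that the paper uses implicitly via $E\cdot p_\alpha = e$, and your remarks on the remainder and the nonvanishing denominator are a harmless addition of rigor.
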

The consequence of proposition \ref{prop:linear} is that by adjusting
the weights and by expanding the matrix $T_{w_\alpha}$ to orders linear in $w_\alpha$,
DRN can approximate the output distribution to be a fraction of linear
combinations of the input distributions in the form,
\begin{equation}
p_{0i} \approx \frac{B_{0i} + 
	\sum_\alpha B_{0i} \circ (\mathcal{E}(w_\alpha) \cdot p_\alpha)_i }
{\sum_j  \left[ B_{0j} + \sum_{\alpha'} B_{0j} \circ (\mathcal{E}(w_{\alpha'}) \cdot p_{\alpha'})_j \right]}
\end{equation}
$\mathcal{E}$ is a matrix linear in $w_\alpha$.
Indeed, the matrix $T_{w_\alpha}$ can be approximated by expanding
to $K$ orders in $w_\alpha$ 
with accuracy of expansion depending on  the magnitudes of $w_\alpha$. If
expansion is up to second order in $w_\alpha$ 
then the output is a fraction of quadratic expressions. If
the expansion in $w_\alpha$ is up to $K$ order then the resulting output is a fraction of polynomials of
$K$ order. At this point we wish to mention DRN's analogy to the well known Pad\'e approximant
\citep{baker1996pade}. Pad\'e approximant is a method of function approximation using fraction of
polynomials.

We compare the linear transformations of DRN with MLP and CNN for the case of transforming an input
distribution to an (unnormalized) output distribution using one network layer. MLP consists of a
linear transformation with a weight matrix followed by addition of a bias vector and elementwise
transformation with an activation function. The linear transformation can be expressed as
$\mathbf{\tilde{p}}_1^{\text{MLP}} =  W \mathbf{p}_0 + \mathbf{b}$, where $\mathbf{p}_0$ is a
$q$-length vector, and $\mathbf{\tilde{p}}_1^{\text{MLP}} $ is the corresponding unnormalized output
distribution, $W$ is a dense weight matrix with $q \times q$ parameters and $\mathbf{b}$ has $q$
values. For CNN, the one-dimensional convolutional filter acts on the input distribution with the weight matrix  $W_c$ arising from a convolutional filter. We compare the linear transformations with the illustration in Figure~\ref{fig:visualTw}.
DRN's weight matrix is highly regularized, where the single free parameter $w$ controls the
propagation behavior. In contrast, MLP has a dense weight with $q \times q$ parameters. The weight
matrix in CNN is more regularized than in MLP but uses more free parameters than DRN. As a result of
DRN's compact representation of the distribution and regularized weight matrix, interpretation of
the network by analyzing the weights becomes easier for DRN.

For non linear transformation, MLP uses a non-linear activation function and with
the single hidden layer,
MLP can fit a wide range of non-linear functions with sufficient number
of hidden nodes
\citep{lecun2015deep}. The non-linear transformation in CNN is similar to MLP, except that the
convolutional filters act as a regularized transformation. 
Without the hidden layer, MLP and CNN behave like logistic regression and they
can only fit functions with linear level-sets.
In contrast, DRN has no activation
function, it achieves nonlinear transformations by using the Hadamard product as explained in
Definition \ref{def:ordern}. As a consequence of the Hadamard product, with no hidden layers, DRN
can fit functions with non-linear level-sets.

In this section we have provided theoretical analysis on DRN's propagation and showed how the varied
propagation behaviors can be controlled by just a few network parameters in contrast to MLP and
CNN. In the subsequent experiments, we show that DRN consistently uses fewer model parameters than
tranditional neural networks and 3BE while achieving better test accuracies. We further investigate
the generalization capablities by varying the number of training data and number of samples drawn
from the distributions.

\section{Experiments}
We conducted experiments with DRN and RDRN on four datasets which involve prediction of time-varying
distributions. We also compare with conventional neural network architectures and other distribution
regression methods. The benchmark methods are multilayer perceptron (MLP), recurrent neural network
(RNN) and Triple-Basis Estimator (3BE) \citep{oliva2015fast}. For the third dataset, we also compare
with Extrapolating the Distribution Dynamics (EDD) \citep{lampert2015predicting} as the data
involves only a single trajectory of distribution. Among these methods, RDRN, RNN and EDD are
designed to take in the inputs sequentially over time while for the rest the inputs from all $T$
time steps are concatenated. Each distribution is discretized into $q$ bins. 

\begin{figure}
\begin{subtable}[b]{0.6\textwidth}
	\small
	\centering
\caption{}
	\label{table:res1}
\begin{tabular}{@{}ccc|cc@{}} \hline
	& \multicolumn{2}{c|}{Shifting Gaussian} & \multicolumn{2}{c}{Climate Model}    \\
	& \multicolumn{2}{c|}{(20 training data)} & \multicolumn{2}{c}{(100 training data)}    \\
	& Test $L_2 (10^{-2})$  & $N_{p}$  & Test $L_2 (10^{-2})$      & $N_{p}$      \\  \hline
	DRN    & \textbf{4.90(0.46)}  & 224   & \textbf{12.27(0.34)} & 44   \\
	RDRN  & \textbf{4.55(0.42)}   & 59   & \textbf{11.98(0.13)}   & 59  \\
	MLP    & 10.32(0.41) & 1303   & 13.52(0.25) & 22700   \\
	RNN    & 17.50(0.89)   & 2210   & 13.29(0.59) & 12650    \\
	3BE    & 22.30(1.89)   & 6e+5   & 14.18(1.29) & 2.2e+5  \\ \hline
\end{tabular}
\end{subtable}
	\begin{subfigure}[c]{0.38\textwidth}
			\caption{}
						\label{fig:sg_Ntrain}
			\includegraphics[width=\columnwidth]{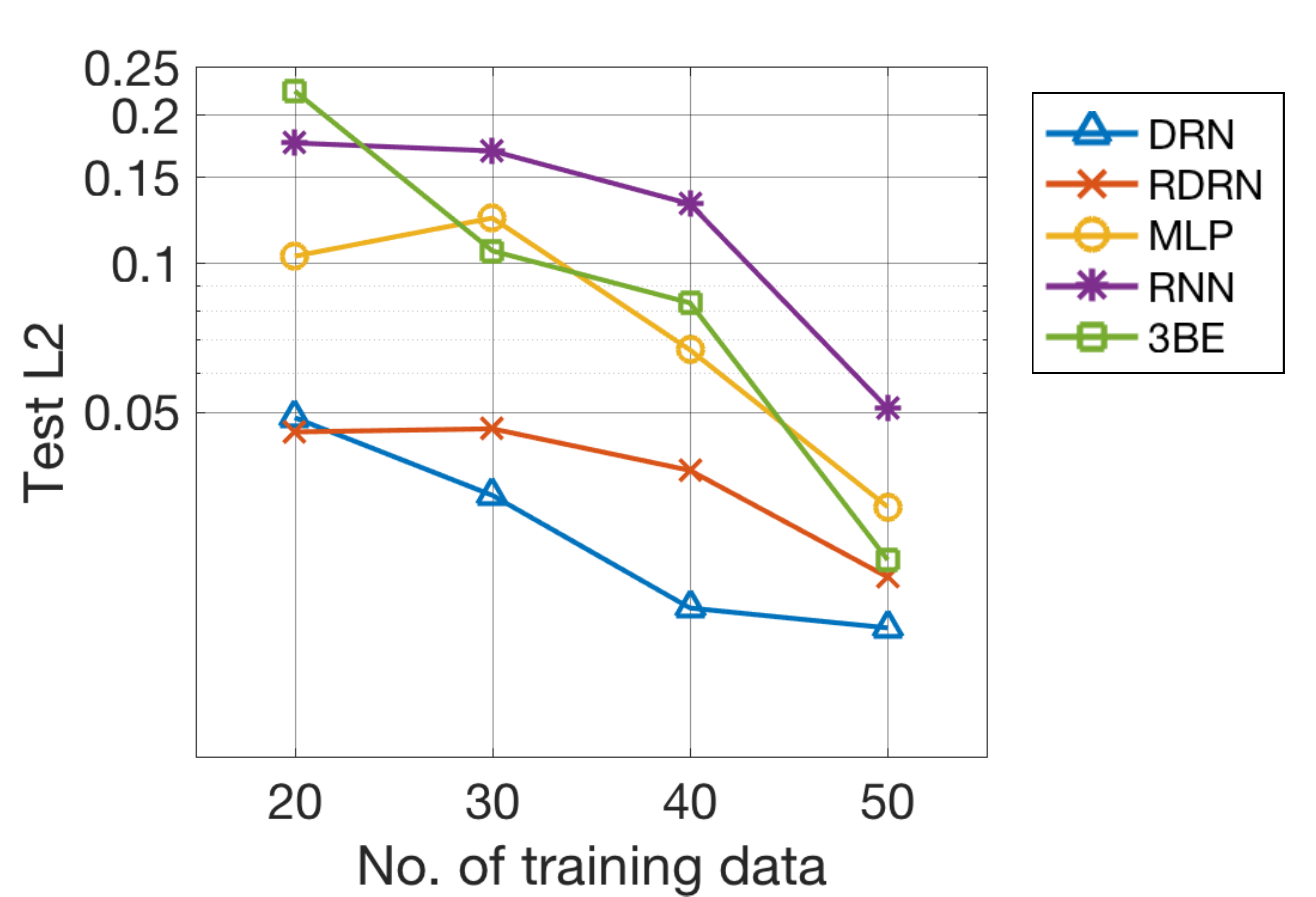}
	\end{subfigure}
\caption{(a) Regression results for the shifting Gaussian and climate model datasets. $N_{p}$: number of model
	parameters. The number in the parentheses is
	the standard error over repeated runs. (b)~Shifting Gaussian dataset: The test performance at small training sizes. Best viewed incolor.}
\end{figure}

\subsection{Shifting Gaussian}
For the first experiment, we adapted the shifting Gaussian experiment from
\citet{lampert2015predicting} but made it more challenging. 
Although this is a synthetic data set, this is the most challenging of all our datasets
as it involves complex shifts in the distribution peaks.
It is used to empirically study how DRN performs better even with fewer number of 
parameters as compared to the benchmark methods.
Our shifting Gaussian means varies sinusoidally over time. Given a few consecutive input distributions taken from time steps spaced
$\Delta t = 0.2$ apart, we predict the next time step distribution. Because of the sinusoidal variation, it is apparent that we require
more than one time step of past distributions to predict the future distribution. 
The specific details of the data construction is in supplementary material. We found that for all methods, a history length of 3 time steps is
optimal. Following \citet{oliva2014fast} the regression performance is measured by the $L_2$ loss,
where lower $L_2$ loss is favorable. 

The plots in Figure \ref{fig:sg_Ntrain} show the test $L_2$ loss as the number of training data
varies. Across the varying training sizes, DRN and RDRN outperform the other methods, except at
training size of 50, where 3BE's test performance catches up. As training size decreases, MLP, RNN
and 3BE show larger decrease in test performance. DRN performs better than RDRN, except at the
smallest training size of 20 where there is no significant difference. The table in Figure \ref{table:res1} shows
the regression results for the training size of 20, and we note DRN and RDRN use much fewer
parameters than the other methods. Overall, DRN and RDRN require at least two times fewer training
data than the other methods for similar test accuracies.

\begin{figure}[h!]
	\begin{subfigure}[b]{0.35\columnwidth}
		\includegraphics[width=\linewidth]{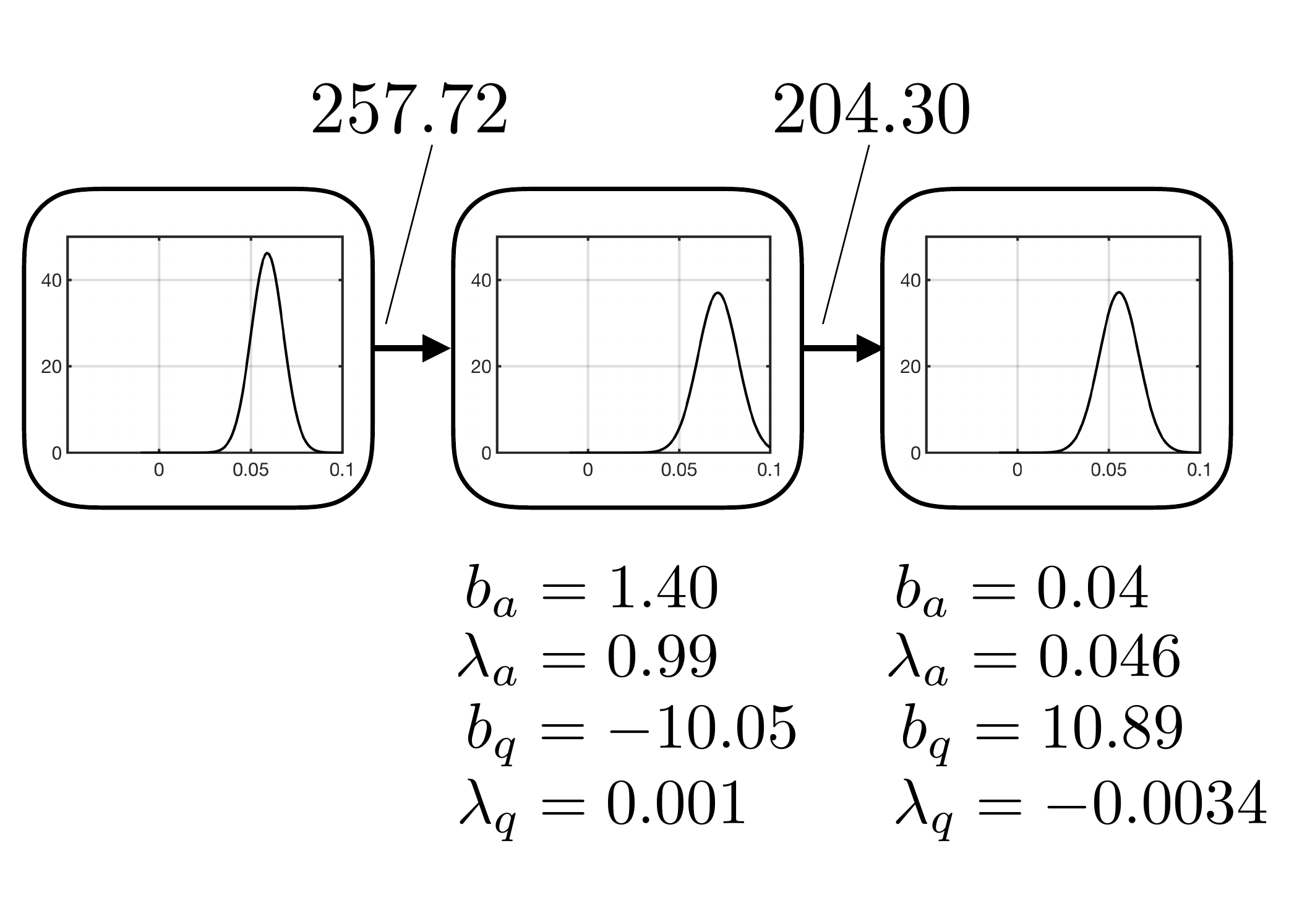}
		\caption{}
		\label{fig:drnvisualizeou}
	\end{subfigure}
\captionsetup{justification=centering}
\begin{subfigure}[b]{0.28\columnwidth}
\includegraphics[width=\columnwidth]{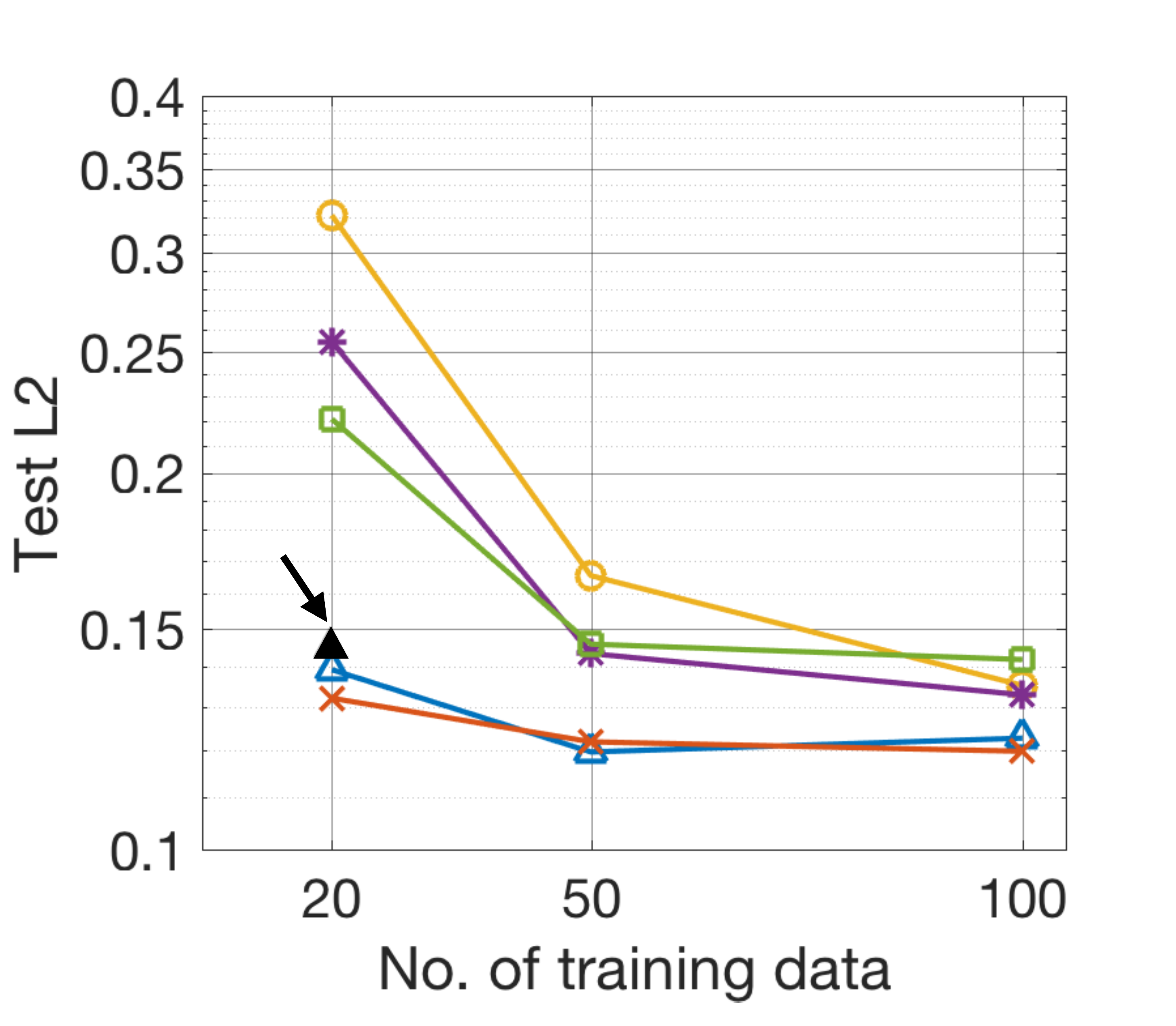}
\caption{Varying number of training data \\ (full pdf, without sampling)}
\label{fig:OU_Ntrain}
\end{subfigure}
\begin{subfigure}[b]{0.35\columnwidth}
\includegraphics[width=\columnwidth]{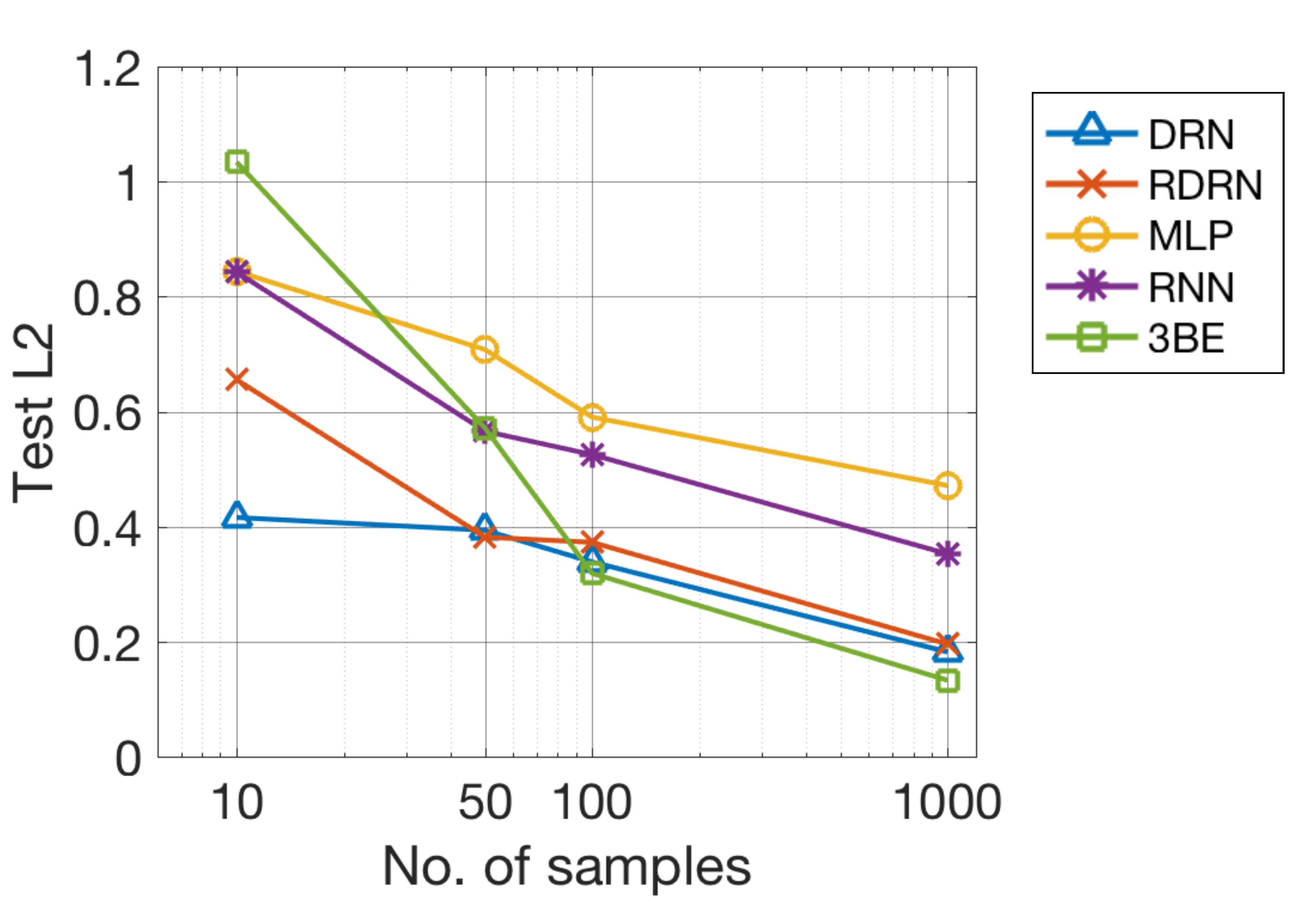}
\caption{Varying number of samples per distribution \\(100 training data)}
\label{fig:OU_Nsamples}
\end{subfigure}
\captionsetup{justification=raggedright}
\caption{Climate model dataset: (a) A DRN network with only 2 weight and 8 bias parameters performs well for the climate
	model dataset. (b) The accuracies of DRN and RDRN remain high as number of training
data decreases. The black triangle indicated by the arrow gives the performance obtained by the DRN network in (a). (c) As number of samples decreases, DRN and RDRN remain robust while the other
methods saw larger decrease in accuracy.}
\label{fig:OUres}
\end{figure}

\subsection{Climate model}
In this experiment, we test the ability of the regression methods to model unimodal Gaussian distributions spreading and drifting over time. We use a climate model which predicts how heat flux at the  sea surface varies \citep{lin1987nonlinear}. The evolution of the heat flux obeys the Ornstein-Uhlenbeck (OU) process
which describes how a unimodal Gaussian distribution spreads and drifts over time. 
The regression task is as follows: Given a sequence of
distributions spaced equally apart with some fixed time gap, predict the distribution some fixed
time step after the final distribution. Details of the dataset is in the supplementary
 material.

The regression results on the test set are shown in the table in Figure \ref{table:res1}. The regression
accuracies for DRN and RDRN are the best, followed by MLP and RNN. With limited training data, the
performance of DRN and RDRN remain very good whereas the other methods saw large decrease in
performance (see Figure ~\ref{fig:OU_Ntrain}). 
Next, we study how the sampling noise in data affects
the regression models' performance by generating different numbers of samples drawn
from distribution. Figure
\ref{fig:OU_Nsamples} shows the performance for varying number of samples drawn for a training size
of 100. DRN and RDRN remain robust at large sampling noise whereas the other methods saw larger
increase in error. 

DRN's compact representation of distributions allows it to perform transformations with very few
network parameters, as discussed in Section \ref{sect:DRNanalysis}. 
In Figure
\ref{fig:drnvisualizeou}, we show a DRN network that has good test accuracy on the climate model
dataset. The network consists of just one hidden node in between the input and output nodes and has
only 10 parameters. We observe that the output distribution follows the expected behavior for the
climate model: shifting towards the long-term mean at zero, with some spread from the input
distribution. The DRN network in Figure \ref{fig:drnvisualizeou} does this by first shifting the
distribution right, and then left, with additional spread at each step.

\begin{table}
	\centering
	\begin{subtable}{.4\linewidth}
		\small
		\begin{tabular}{@{}ccc@{}}  \hline	
			&  \multicolumn{2}{c}{CarEvolution (5 training data)}   \\
			& Test NLL   & $N_{p}$  \\  \hline	
			DRN &  \textbf{3.9663(2e-5)}  & 28676   \\
			RDRN  & \textbf{3.9660(3e-4)}  & 12313   \\
			MLP & 3.9702(6e-4) & 1.2e+7   \\
			3BE & 3.9781(0.003)  & 1.2e+7  \\
			EDD & 4.0405  & 64 \\  \hline
		\end{tabular}
		\caption{}
		\label{table:res_cars}
	\end{subtable}%
	\begin{subtable}{.55\linewidth}
		\small
		\begin{tabular}{@{}ccccc@{}}  \hline	
			&  \multicolumn{4}{c}{Stock (200 training data)}   \\
			&  Test NLL (1 day)     & Test NLL (10 days)  & $T$ & $N_{p}$  \\  \hline	
			DRN  & \textbf{-473.93(0.02)} & -458.08(0.01) & 1 & 9    \\
			RDRN & -469.47(2.43) & \textbf{-459.14(0.01)} & 3 & 37    \\
			MLP & -471.00(0.04)& -457.08(0.98) & 3 & 10300   \\
			RNN & -467.37(1.33) & -457.96(0.20)& 3 & 4210   \\
			3BE & -464.22(0.16) & -379.43(11.8) & 1 & 14000   \\ \hline
		\end{tabular}
		\caption{}
		\label{table:res_stock}
	\end{subtable}
	\caption{Regression results for the (a) CarEvolution  and (b) stock dataset. $NLL$: negative log-likelihood, $T$: optimal number of input time steps, $N_{p}$: number of model parameters used. Lower loss values reflect better regression accuracies.}
\end{table}

\subsection{CarEvolution data}
For the next experiment, we use the CarEvolution dataset \citep{rematas2013does} which was used by
\citet{lampert2015predicting} to evaluate EDD's ability to track the distribution drift of image
datasets. This is very useful for training classifiers where the data distribution changes over time. The dataset consists of images of
cars manufactured from different years and from each time period, we obtain a distribution of DeCAF features \citep{donahue2014decaf} of the car images in that period. Here we make the approximation that the DeCAF features
are independent. For this dataset, the distributions can be multimodal and non-Gaussian, as shown in the pdfs in the supplementary material.

The regression task is to predict the next time step distribution of features given the previous $T$
time step distributions. We found $T$=2 to work best for all methods. 
The regression performance is measured by the negative log-likelihood (NLL) of the
test samples following \citet{oliva2013distribution}, where lower NLL is favorable. The regression
results are shown in Table~\ref{table:res_cars}. DRN and RDRN have the best test performance. RNN
had difficulty in optimization possibly due to the high number of input dimensions, so the results
are not presented. EDD has the fewest number of parameters as it assumes the dynamics of the
distribution follows a linear mapping between the RKHS features of consecutive time steps (i.e.
$T$=1). However, as the results show, the EDD model may be too restrictive for this dataset. For
this dataset, since the number of training data is very small, we do not vary the size of the
training set. We also do not vary the sample size since each distribution contains varying number of
samples.

\subsection{Stock prediction}
The next experiment is on stock price distribution prediction which has been studied extensively by
\citet{KOU2018}. We adopt a similar experimental setup and extend to multiple time steps.
Predicting future stock returns distributions has significant value in the real-world setting and
here we test the methods' abilities to perform such a task. Our regression task is as follows: given
the past $T$ days' distribution of returns of constituent companies in FTSE, Dow and Nikkei, predict
the distribution of returns for constituent companies in FTSE $k$ days later. We used 5 years of
daily returns and each distribution is obtained using kernel density estimation with the
constituent companies' returns as samples. In the supplementary material, we show some samples of the distributions.

\begin{figure}[h!]
\centering
\begin{subfigure}[b]{0.3\columnwidth}
\includegraphics[width=\columnwidth]{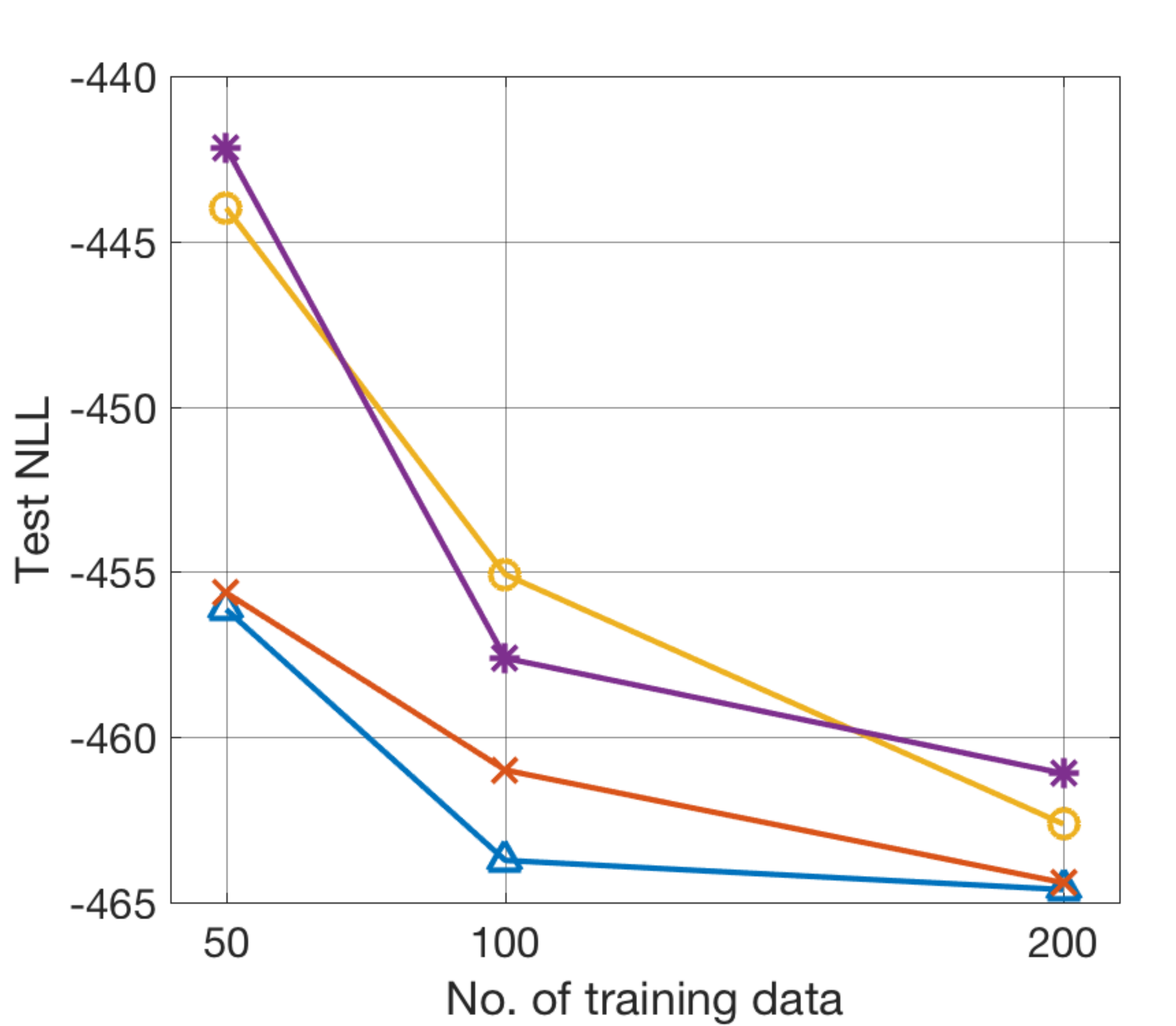}
\caption{5 days ahead}
\label{fig:stock_Ntrain}
\end{subfigure}
\begin{subfigure}[b]{0.3\columnwidth}
\includegraphics[width=\columnwidth]{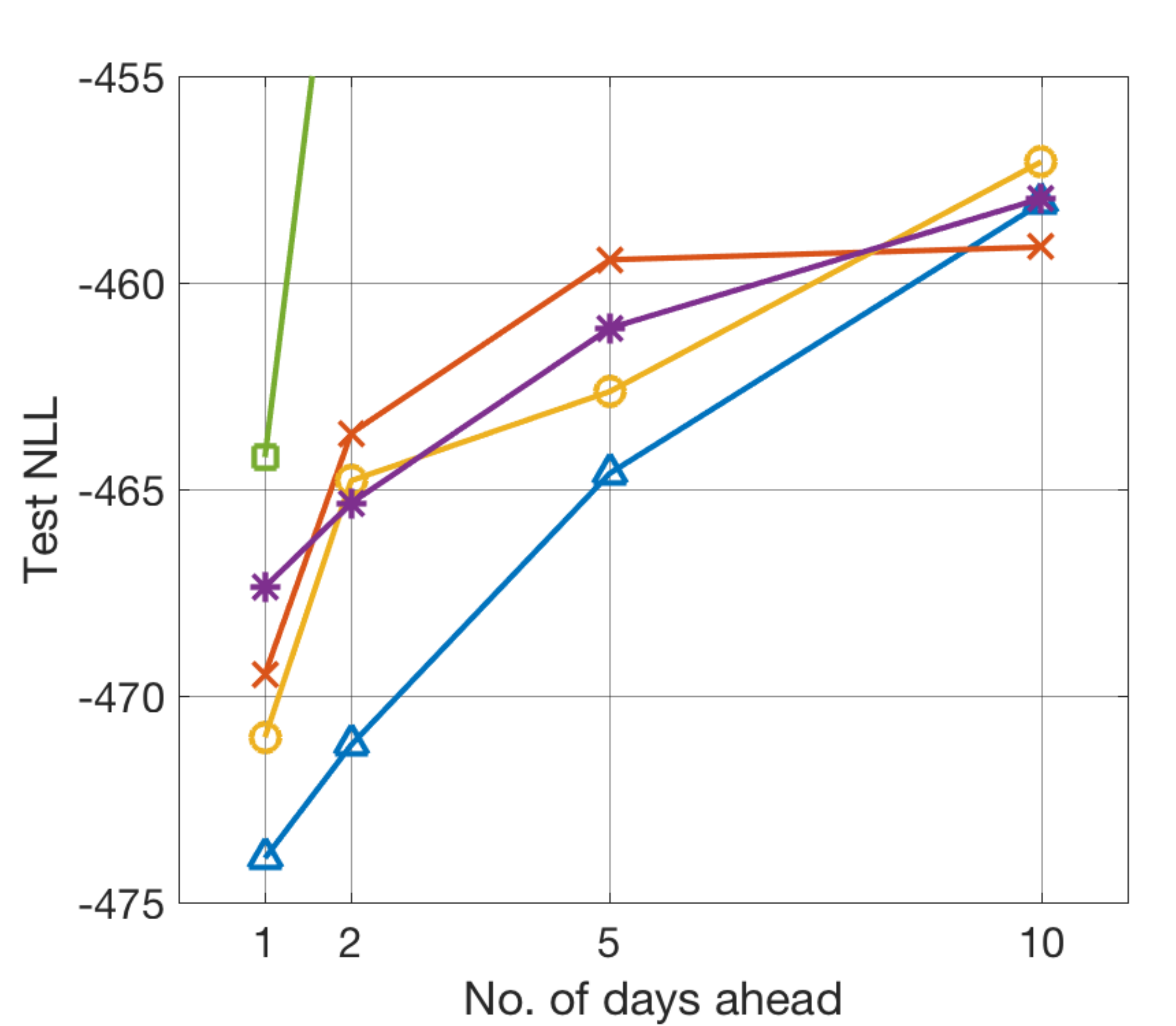}
\caption{200 training data}
\label{fig:stock_Ndays_train200}
\end{subfigure}
\begin{subfigure}[b]{0.38\columnwidth}
\includegraphics[width=\columnwidth]{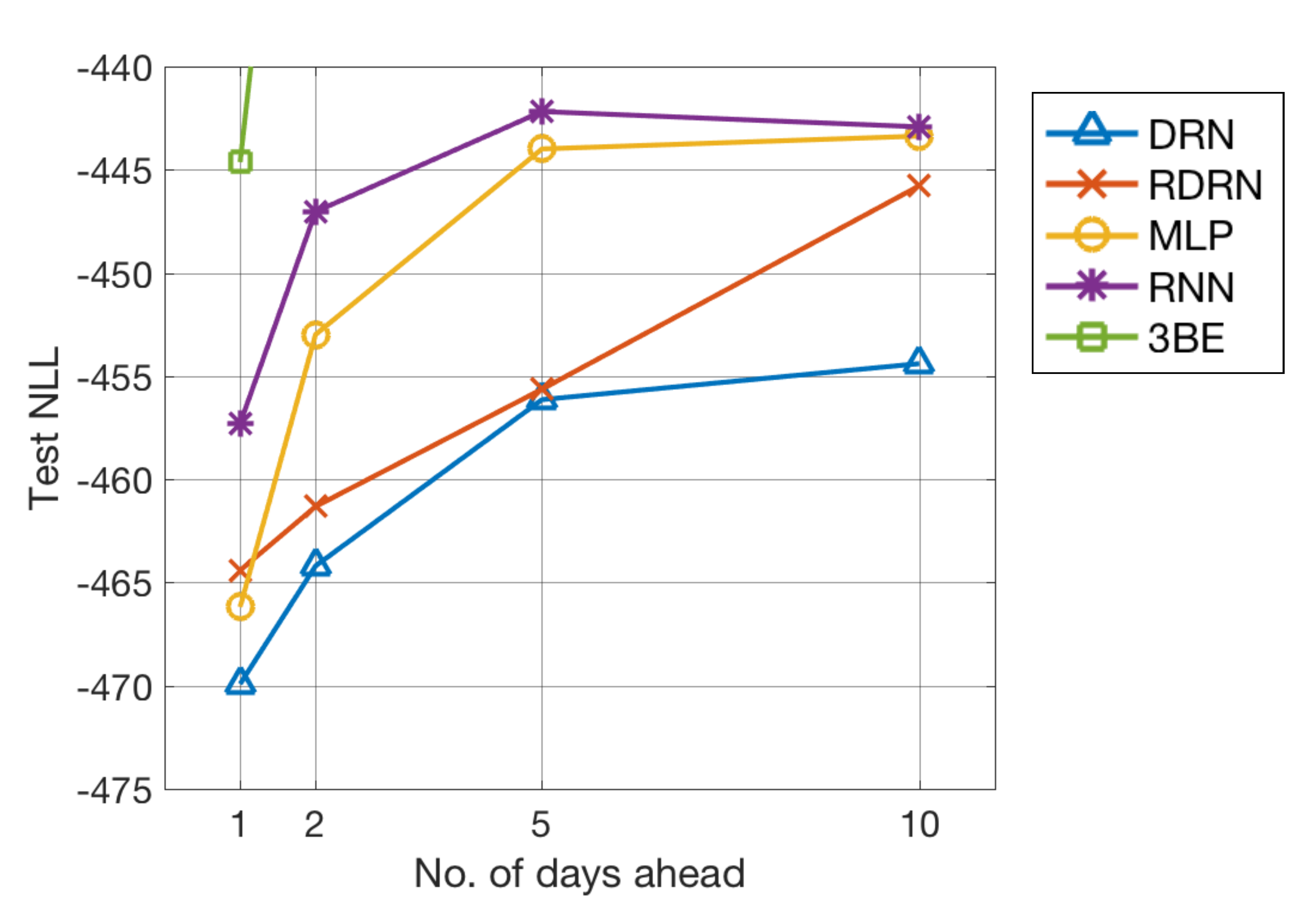}
\caption{50 training data}
\label{fig:stock_Ndays_train50}
\end{subfigure}
\caption{Stock dataset: (a) Test negative log-likelihood for varying number of training data, for 5
days ahead prediction. DRN and RDRN's performance remains robust at low training sizes. 3BE's result
is not shown as it is out of range. (b, c) Test NLL for varying number of days ahead prediction, for
50 and 200 training data. 3BE's NLL is out of range for 2, 5, and 10 days ahead. DRN's performance
remains robust as number of days increases, especially for limited training size of 50. }
\label{fig:stockres}
\end{figure}

We first observe how robust the methods are when training data is limited. Figure
\ref{fig:stock_Ntrain} shows the performance for 5 days ahead prediction with varying number of
training sizes. DRN and RDRN's performance remains robust even as the number of training data
decreases from 200 to 50, whereas MLP, RNN and 3BE have larger decrease in test accuracy. Next, to
study how the models perform with increasing level of task difficulty, we vary number of days
ahead to predict. The results are shown in Figure \ref{fig:stock_Ndays_train200}
and \ref{fig:stock_Ndays_train50} for 200 and 50 training data respectively. As expected, as the
number of days ahead increases, the task difficulty increases and all methods see a decrease in
accuracy. DRN remains robust as the number of days ahead increases, especially for the smaller
training size of 50.

Table \ref{table:res_stock} shows the regression results for training size of 200 for 1 and 10 days
ahead. For 1 day ahead performance, DRN outperforms the other methods, followed by MLP then RDRN. Since for this experiment DRN uses only one previous day of
input, this suggests that the 1 day ahead prediction task does not involve long time dependencies.
Predicting 10 days ahead is a more challenging task which may benefit from having a longer history
of stock movements.  For a training size of 200, RDRN is the best method, using 3 days of input
history. This suggests that for a prediction task which involves longer time dependencies, having a
recurrent architecture for DRN is beneficial when training size is sufficiently large. 

\section{Discussion}
In this work, we address a gap in current work on distribution regression models, in that there is a lack of systematic study on the theoretical basis and generalization abilities of the various methods. The distribution regression network (DRN) has been shown to achieve higher accuracies than conventional neural networks \citep{KOU2018}. To address a lack of theoretical comparison of previous works, we studied the mathematical properties of DRN and conventional neural networks in Section \ref{sect:DRNanalysis}, which gave further insights to the difference in the effects of the network parameters in the various models. In summary, we analyzed that a single weight parameter in DRN can control the propagation behavior in single nodes, ranging from the identity function to peak spreading. The propagation in DRN is highly regularized in contrast to that in MLP and CNN where there are many more parameters. In addition, DRN can fit higher order functions exponentially quickly by adding hidden layers while keeping the network compact.

These mathematical properties of DRN give insights to our experimental findings. We conducted thorough experimental validation on the generalization performance of DRN, conventional neural network models and 3BE. DRN achieves superior test accuracies with robust performance at limited training sizes, noisy data sampling and increasing task difficulty. Furthermore, the number of model parameters in DRN is much smaller. This can be attributed to the mathematical properties of DRN: the highly regularized propagation allows it to generalize better than conventional neural networks.

For future work, we look to extend to multivariate distributions, which will be useful in various applications such as modeling the 3D distribution of dark matter \citep{ravanbakhsh2016estimating} and studying human populations through multi-dimensional census data \citep{flaxman2015supported}. Another possibility is to extend DRN for the general function-to-function regression task.

\bibliography{main}
\bibliographystyle{bibliostyle}

\appendix
\section{Recurrent extension for DRN}\label{sect:appen_rdrn}
\begin{figure}[h!]
	\centering
	\includegraphics[width=0.4\columnwidth]{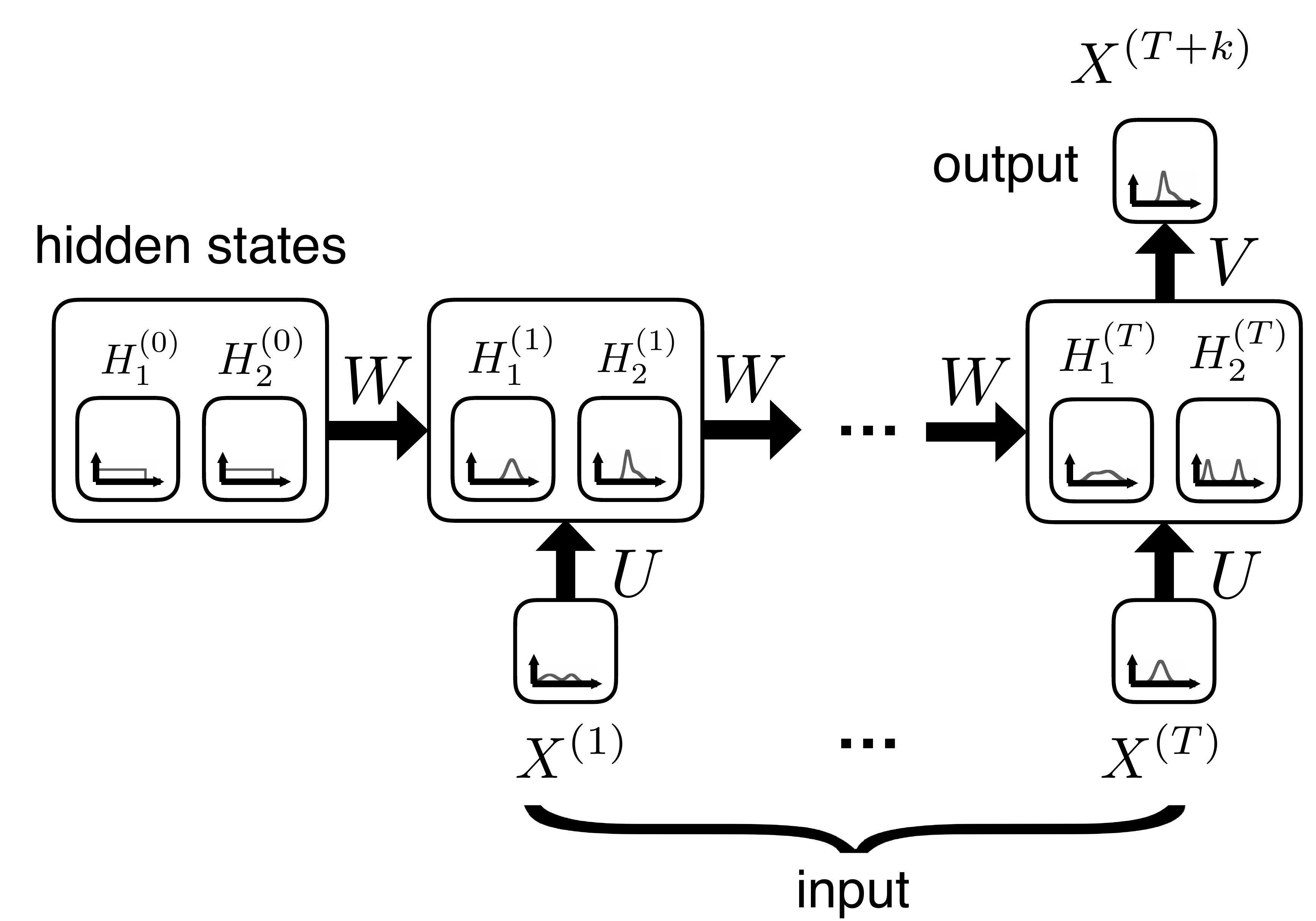}
	\caption{Recurrent distribution regression network}
	\label{fig:rdrn_multiple_dim}
\end{figure}

We introduce the recurrent distribution regression network (RDRN) which is a recurrent extension of
DRN. The input data is a distribution sequence as described in Section 3.1.
Figure~\ref{fig:rdrn_multiple_dim} shows an example network for RDRN, where the network takes in $T$
time steps of distributions to predict the distribution at $T+k$. The hidden state at each time step
may consist of multiple distributions. The arrows represent fully-connected weights. The
input-hidden weights $U$ and the hidden-hidden weights $W$ are shared across the time steps. $V$
represents the weights between the final hidden state and the output distribution. The bias
parameters for the hidden state nodes are also shared across the time steps. The hidden state
distributions at $t=0$ represents the `memory' of all past time steps before the first input and can
be initialized with any prior information. In our experiments, we initialize the $t=0$
hidden states as uniform distributions as we assume no prior information is known.

We formalize the propagation for the general case where there can be multiple distributions for each
time step in the data input layer and the hidden layer. Let $n$ and $m$ be the number of
distributions per time step in the data layer and hidden layers respectively. Propagation starts
from $t$=1 and performed through the time steps to obtain the hidden state distributions.
$X_i^{(t)}(r_i^{(t)})$ represents the input data distribution at node $i$ and time step $t$, when
the node variable is $r_i^{(t)}$. ${H}_k^{(t)}(s_k^{(t)})$ represents the density of the pdf of the
$k^{\text{th}}$ hidden node at time step $t$ when the node variable is $s_k^{(t)}$.
$\tilde{H}_k^{(t)}(s_k^{(t)})$  represents the unnormalized form. The hidden state distributions at
each time step is computed from the hidden state distributions from the previous time step and the
input data distribution from the current time step.
\begin{align}
\label{eq:rdrn_margin}
\tilde{H}_k^{(t)}\left(s_k^{(t)}\right) &=
\int_{{r_1}^{(t)},\cdots,{r_n}^{(t)},{s_1}^{(t-1)},\cdots,{s_m}^{(t-1)}}
\tilde{Q}\left(s_k^{(t)}|r_1^{(t)},\cdots, s_1^{(t-1)},\cdots\right) \\ \nonumber 
& \hspace{2cm} X_1^{(t)}\left(r_1^{(t)}\right)\cdots X_n^{(t)}\left(r_n^{(t)}\right)  
H_1^{(t-1)}\left(s_1^{(t-1)}\right)\cdots H_m^{(t-1)}\left(s_m^{(t-1)}\right)  \\ \nonumber
& \hspace{7cm}\,dr_1^{(t)} \cdots dr_n^{(t)}\,ds_1^{(t-1)} \cdots ds_m^{(t-1)} 
\end{align}
\begin{align}
\label{eq:rdrn_condprob}
\tilde{Q}(s_k^{(t)}|r_1^{(t)},\cdots, s_1^{(t-1)},\cdots) = e^{
	-E\left(s_k^{(t)}|r_1^{(t)},\cdots, s_1^{(t-1)},\cdots \right)}
\end{align}
The energy function is similar to the one in DRN and is given by
\begin{align}
\label{eq:rdrn_energy}
E\left(s_k^{(t)}|r_1^{(t)},\cdots, s_1^{(t-1)},\cdots \right) &= \sum_i^n u_{ki} \left(
\frac{s_k^{(t)}-r_i^{(t)}}{\Delta}\right)^2 
+\sum_j^m w_{kj} \left( \frac{s_k^{(t)}-s_j^{(t-1)}}{\Delta}\right)^2  \\ \nonumber
&+ b_{q,k}\left( \frac{s_k^{(t)}-\lambda_{q,k}}{\Delta} \right)^2  + b_{a,k}
\left|\frac{s_k^{(t)}-\lambda_{a,k}}{\Delta} \right|,
\end{align}
where for each time step, $u_{ki}$ is the weight connecting the $i^\text{th}$ input distribution to
the  $k^\text{th}$  hidden node. Similarly, for the hidden-hidden connections, $w_{kj}$ is the
weight connecting the $j^\text{th}$ hidden node in the previous time step to the $k^\text{th}$
hidden node in the current time step. As in DRN, the hidden node distributions are normalized before
propagating to the next time step. At the final time step, the output distribution is computed
from the hidden state distributions, through the weight vector $V$ and bias parameters at the output
node.

\section{Proofs for propositions for DRN theory}\label{sect:appen_DRNproofs}
Forward propagation in DRN can be written as a combination
of linear transformation and Hadamard products and then
normalized. For a node of order $n$, its activation $p_0$ is
\begin{equation}
\tilde{p}_0 = B_0 \circ (T_{w_1} \cdot p_1) \circ (T_{w_2} \cdot p_2)
\circ \cdots \circ (T_{w_n}\cdot p_n) = B_0 \circ \breve{\prod}^n_{i=1}
T_{w_i} \cdot p_i
\label{appen_eq:ptil}
\end{equation}
$\breve{\prod}$ is a symbol for Hadamard products and
\begin{equation}
p_0 = \tilde{p}_0 / | \tilde{p}_0|
\label{appen_eq:p}
\end{equation}
$\circ$ is the element wise Hadamard product operator, $\cdot$ is the matrix
multiplication operator. $B_0$ is a vector representing the bias term 
whose components are given by
\begin{equation}
(B_0)_i = \exp\left( 
-b_q \left(\frac{s_i - \lambda_q}{\Delta}\right)^2  
-b_a \left|\frac{s_i - \lambda_a}{\Delta}\right| \right)
\end{equation}
$T_{w_i}$ is a symmetric transformation matrix corresponding to the 
connections in DRN
whose elements are given by
\begin{equation}
(T_{w_i})_{qr} = \exp\left( - w_i 
\left(\frac{s_q - s_r}{\Delta}\right)^2 \right)
\end{equation}

\begin{proposition}
	A node connecting to a target node with zero weight $w=0$ has no effect on the
	activation of the target node.
\end{proposition}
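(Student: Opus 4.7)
The plan is to plug $w=0$ directly into the expressions for $T_{w_i}$ and then read off what happens to the product formula $\tilde{p}_0 = B_0 \circ \breve{\prod}_{i=1}^n T_{w_i}\cdot p_i$. Without loss of generality, suppose the connection from the $i$th incoming node has weight $w_i=0$; I want to show the activation $p_0$ does not depend on $p_i$.

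First, I would substitute $w_i=0$ into the definition $(T_{w_i})_{qr}=\exp(-w_i(s_q-s_r)^2/\Delta^2)$ to conclude that every entry of $T_{w_i}$ equals $\exp(0)=1$, i.e.\ $T_{w_i}$ is the $q\times q$ all-ones matrix. Second, I would compute $T_{w_i}\cdot p_i$: the $q$th component is $\sum_r (p_i)_r$, which equals $1$ because $p_i$ is a (normalized) probability mass function summing to unity, so $T_{w_i}\cdot p_i = \mathbf{1}$, the all-ones vector. Third, since $\mathbf{1}$ is the identity for the Hadamard product, the factor $(T_{w_i}\cdot p_i)$ drops out of the Hadamard product, leaving
\begin{equation*}
\tilde{p}_0 = B_0 \circ \breve{\prod}_{j\neq i} T_{w_j}\cdot p_j,
\end{equation*}
which has no dependence on $p_i$. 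Finally, normalization $p_0=\tilde{p}_0/|\tilde{p}_0|$ preserves this independence, so $p_0$ is unchanged by any modification of the $i$th incoming distribution.

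There is essentially no obstacle here: the whole argument hinges on the single observation that the Boltzmann-form transformation matrix degenerates to the all-ones matrix at $w=0$, combined with the fact that incoming activations are normalized (which is guaranteed by Proposition~\ref{prop:hidden_norm} and the definition of the input pdfs). The only thing to be slightly careful about is to state the normalization assumption on $p_i$ explicitly, since otherwise $T_{w_i}\cdot p_i$ would be $(\sum_r (p_i)_r)\mathbf{1}$, which is still a scalar multiple of $\mathbf{1}$ and thus still absorbed after final normalization of $\tilde{p}_0$, but the cleanest statement uses $\sum_r (p_i)_r = 1$.
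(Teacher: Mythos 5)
Your proof is correct and follows essentially the same route as the paper's: substitute $w=0$ to get the all-ones matrix, observe $T_{w=0}\cdot p_i$ is the all-ones vector (the paper leaves the "easy to show" step implicit, while you make the reliance on $\sum_r (p_i)_r = 1$ explicit), and then drop the factor from the Hadamard product. Your added remark that an unnormalized $p_i$ would only contribute a scalar multiple absorbed by the final normalization is a small but genuine strengthening of the paper's argument.
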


\begin{proof}
	Without loss of generality, suppose $w_1=0$ in Eq. (\ref{appen_eq:ptil}), it is easy to show that
	\begin{equation}
	T_{w_1=0} \cdot p_1 = e = (1,1,\cdots 1)^t
	\end{equation}
	$e = (1,1,\cdots 1)^t$ is a vector with all ones. Using the identity,
	$u \circ e = u$ for any vector $u$, the term $T_{w_1} \cdot p_1$ drops
	out from Eq. (\ref{appen_eq:ptil}). Therefore $\tilde{p}_0$ does not depend
	on $p_1$.
\end{proof}
Similar to conventional neural networks, this is a mechanism for which DRN
can learn to ignore spurious nodes by setting their weights to zero or near zero.

\begin{proposition}
	For a node connecting to a target node with sufficiently large positive weight $w \to \infty$, the
	transformation matrix approaches the identity matrix: $T_w \to I$.
\end{proposition}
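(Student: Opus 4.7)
The plan is to analyze the entries of the matrix $T_w$ defined in the excerpt directly, splitting into diagonal and off-diagonal cases, and showing the limit entrywise. Recall the formula
\begin{equation}
(T_w)_{qr} = \exp\!\left( -w \left( \frac{s_q - s_r}{\Delta} \right)^{\!2}\right),
\end{equation}
where $s_1,\dots,s_q$ are the (distinct) discretization points used to represent a distribution as a $q$-vector.

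First, I would handle the diagonal entries. When $q=r$, the argument of the exponential is $0$ for every value of $w$, so $(T_w)_{qq}=1$ identically. This already matches the diagonal of $I$.

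Next, I would handle the off-diagonal entries. When $q \neq r$, the discretization gives $s_q \neq s_r$, so $\bigl(\tfrac{s_q-s_r}{\Delta}\bigr)^{2}$ is a strictly positive constant (independent of $w$). Taking $w\to\infty$, the exponent tends to $-\infty$, and hence $(T_w)_{qr}\to 0$. Combining the two cases gives entrywise convergence $T_w \to I$ as $w\to\infty$, which is the desired claim. Because $T_w$ is a fixed finite $q\times q$ matrix, entrywise convergence is the same as convergence in any norm, so no further argument is needed on that front.

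There is no real obstacle here; the only thing to be careful about is the non-degeneracy of the discretization, namely that the bin centers $s_q$ are pairwise distinct, which is guaranteed by construction in Section \ref{sect:DRN}. One might also remark, as a sanity check, that $T_w\to I$ implies $T_w\cdot p_i \to p_i$ for any input vector $p_i$, so that the Hadamard-product factor in Eq.~(\ref{appen_eq:ptil}) coming from this connection reduces to $p_i$ itself, i.e.~the connection realizes the identity mapping from the source node onto the target node, justifying the remark following the proposition statement.
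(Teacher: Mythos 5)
Your proof is correct and takes essentially the same route as the paper, which simply asserts that $T_{w\to\infty}\cdot p_1 = I\cdot p_1 = p_1$ is ``easy to show''; you supply the entrywise argument (diagonal entries identically $1$, off-diagonal entries $\exp(-w((s_q-s_r)/\Delta)^2)\to 0$ since $s_q\neq s_r$) that the paper leaves implicit. No gaps.
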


\begin{proof}
	Suppose $w \to \infty$ in Eq. (\ref{appen_eq:ptil}), it is easy to show that
	\begin{equation}
	T_{w_1 \to \infty} \cdot p_1 = I \cdot p_1 = p_1
	\end{equation}
\end{proof}
The consequence is that the identity mapping from one node to another can be realized.

\begin{lemma}
	Output of DRN is invariant to scaling the input by constant factors.
	\label{appen_prop:scale}
\end{lemma}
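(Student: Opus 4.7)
The plan is to exploit linearity of the ingredients that appear in Eq. (\ref{appen_eq:ptil}) and then kill the overall multiplicative constant with the normalization step of Eq. (\ref{appen_eq:p}). Concretely, suppose each input activation $p_i$ is replaced by $\alpha_i p_i$ for arbitrary positive constants $\alpha_1,\dots,\alpha_n$. Because matrix multiplication is linear, $T_{w_i} \cdot (\alpha_i p_i) = \alpha_i\,(T_{w_i}\cdot p_i)$, and because the Hadamard product is bilinear, any scalar multiplying one of its factors can be pulled out front.

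Carrying out these two moves in turn, I would show that the unnormalized output scales by the product of the constants,
\begin{equation}
\tilde{p}_0' \;=\; B_0 \circ \breve{\prod}_{i=1}^{n}\bigl(T_{w_i}\cdot(\alpha_i p_i)\bigr)
\;=\;\Bigl(\prod_{i=1}^{n}\alpha_i\Bigr)\,\tilde{p}_0 .
\end{equation}
The normalization in Eq. (\ref{appen_eq:p}) then gives
\begin{equation}
p_0' \;=\; \frac{\tilde{p}_0'}{|\tilde{p}_0'|}
\;=\; \frac{(\prod_i \alpha_i)\,\tilde{p}_0}{(\prod_i \alpha_i)\,|\tilde{p}_0|}
\;=\; \frac{\tilde{p}_0}{|\tilde{p}_0|} \;=\; p_0 ,
\end{equation}
so the activation of the target node is unchanged.

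The only small subtlety — and the thing I would flag as the main (mild) obstacle — is to be explicit about which quantities the scaling constants act on, since the bias vector $B_0$ is itself not scaled. That is why the argument has to pull the $\alpha_i$ through both the linear map $T_{w_i}$ \textit{and} the Hadamard product with $B_0$, rather than claiming some global linearity. Once that is clear, the proof is essentially a one-line calculation, and there is no deeper step. A brief remark could then observe that this lemma makes Proposition \ref{prop:hidden_norm} an immediate corollary when the constants are chosen to be the reciprocal normalization factors of each incoming node, so numerical rescaling of hidden activations does not change the final output.
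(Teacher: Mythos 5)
Your proposal is correct and follows essentially the same route as the paper's own proof: scaling each input $p_i$ by a constant factors out of the linear map $T_{w_i}$ and the Hadamard product, so $\tilde{p}_0$ picks up the overall factor $\prod_i \alpha_i$, which is then removed by the normalization in Eq. (\ref{appen_eq:p}). Your explicit remark that $B_0$ is untouched and your observation connecting the lemma to the normalization-invariance of hidden layers are both consistent with how the paper uses this result.
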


\begin{proof}
	Activation of one layer is invariant to scaling of the 
	activation in the previous layer. Using Eq. (\ref{appen_eq:ptil}), 
	performing
	the transformation $p_i \leftarrow c_i p_i$, where $c_i$ are
	scalars, leads to 
	$\tilde{p}_0 \leftarrow c_1 c_2 \cdots c_n \tilde{p}_0$,
	subsequent normalization makes $p_0$
	invariant to any scaling factors. The effects of scaling in a layer
	in the network is immediately eliminated in the next layer by
	normalization.
\end{proof}

\begin{proposition}
	Output of DRN is invariant to normalization of all hidden layers of DRN.
	\label{appen_prop:hidden_norm}
\end{proposition}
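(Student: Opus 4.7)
The plan is to reduce the statement to the already-established scaling invariance of Lemma \ref{appen_prop:scale}, because normalization of a hidden node's activation is simply a specific choice of positive scalar multiplier, namely $c_i = 1/|\tilde{p}_i|$. Since Lemma \ref{appen_prop:scale} shows that the activation of the next layer is unchanged under \emph{any} rescaling $p_i \leftarrow c_i p_i$ (the factors $c_1 c_2 \cdots c_n$ pull outside the Hadamard product in Eq.~(\ref{appen_eq:ptil}) and are then killed by the final normalization step of Eq.~(\ref{appen_eq:p})), the particular choice $c_i = 1/|\tilde{p}_i|$ leaves the downstream activations identical.

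First I would fix a layer $l$ in the interior of the network and observe that if each node $i$ in layer $l$ has its (already-computed) unnormalized activation $\tilde{p}_i^{(l)}$ replaced by its normalized form $p_i^{(l)} = \tilde{p}_i^{(l)}/|\tilde{p}_i^{(l)}|$, this is exactly the substitution $p_i^{(l)} \leftarrow c_i\, \tilde{p}_i^{(l)}$ with $c_i = 1/|\tilde{p}_i^{(l)}| > 0$. Plugging this into the propagation equation for each node in layer $l+1$ and factoring out the $c_i$ from each Hadamard factor shows that every node activation in layer $l+1$ acquires a common positive prefactor $\prod_i c_i$, which vanishes under the normalization in Eq.~(\ref{appen_eq:p}). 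Hence the normalized output of layer $l+1$ is unchanged.

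Next I would iterate this argument layer by layer, from the first hidden layer through to the output. At each stage, the input to the next layer is identical whether or not the previous hidden layer was normalized, so the output computed after normalizing all hidden layers equals the output computed without any hidden-layer normalization. A clean way to formalize this is by induction on the layer index: assume that after optionally normalizing layers $1, \ldots, l-1$ the activations of layer $l$ are the same (up to possibly a positive scaling), and then apply Lemma \ref{appen_prop:scale} to pass to layer $l+1$. The base case is trivial since the input layer is the data itself, and the final normalization at the output layer absorbs any remaining scalar factor.

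I do not anticipate a genuine obstacle here, since the result is essentially a corollary of Lemma \ref{appen_prop:scale}; the only bookkeeping care needed is to make sure the induction handles the case of multiple hidden layers being normalized simultaneously, which is straightforward because the scalar factors introduced at different layers are independent and all get absorbed by the output normalization.
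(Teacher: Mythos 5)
Your argument is correct and rests on the same mechanism as the paper's proof: positive scalars introduced by per-node normalization factor out of the $T_{w}$ maps and the Hadamard products, accumulate multiplicatively layer by layer, and are annihilated by the final output normalization. The paper packages this as an explicit two-network computation tracking the accumulated factors $Z^{(l-1)}$ rather than as an induction invoking Lemma~\ref{appen_prop:scale}, but the content is the same, and your induction invariant (``activations agree up to a positive scalar'') correctly handles the bookkeeping.
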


\begin{proof}
	For the purpose of proving, construct two networks identical in
	architecture and weights,
	forward propagate both networks, one network with
	nodes in the hidden layer normalized and the other network with nodes
	in the hidden layer unnormalized.
	
	Let $n_0$ be the number of input nodes and
	$n_1,\cdots $ be the number of hidden nodes in the hidden layers. Let activation in the input nodes be $p_i$, $i=1,\cdots n_0$. Let
	activation of the $i$th
	node in the $l$th
	hidden layer be $hn^{(l)}_i$ for the network with normalized
	hidden nodes and $hu^{(l)}_i$ for the network with unnormalized 
	hidden nodes.
	
	Performing forward 
	propagation for both networks,
	\begin{equation}
	\tilde{hn}^{(1)}_i = B_i^{(1)} \circ \breve{\prod}_{j=1}^{n_0} 
	T_{w^{(1)}_j} \cdot p_j
	\end{equation}
	\begin{equation}
	{hn}^{(1)}_i =  \frac{\tilde{hn}^{(1)}_i}{|\tilde{hn}^{(1)}_i|}
	\end{equation}
	\begin{equation}
	\tilde{hu}^{(1)}_i = B_i^{(1)} \circ \breve{\prod}_{j=1}^{n_0} 
	T_{w^{(1)}_j} \cdot p_j =  \tilde{hn}^{(1)}_i 
	\end{equation}
	For the second layer,
	\begin{eqnarray}
	\tilde{hn}^{(2)}_i & =  & 
	B_i^{(2)} \circ \breve{\prod}_{j=1}^{n_1}  T_{w^{(2)}_j} \cdot hn^{(1)}_j\\
	& = &
	\frac{B_i^{(2)} \circ \breve{\prod}_{j=1}^{n_1}  
		T_{w^{(2)}_j} \cdot \tilde{hn}^{(1)}_j }{
		\prod_{j'=1}^{n_1} |\tilde{hn}^{(1)}_{j'}| } \\ \nonumber
	&=& 
	\frac{B_i ^{(2)}\circ \breve{\prod}_{j=1}^{n_1}  
		T_{w^{(2)}_j} \cdot \tilde{hn}^{(1)}_j }{ z^{(1)} } 
	\end{eqnarray}
	$z^{(1)} =  \prod_{j'=1}^{n_1} |\tilde{hn}^{(1)}_{j'}|$ is the
	normalization scalar that factorizes out of the Hadamard product.
	\begin{equation}
	{hn}^{(2)}_i =  \frac{\tilde{hn}^{(2)}_i}{|\tilde{hn}^{(2)}_i|}
	\end{equation}
	\begin{eqnarray} \nonumber
	\tilde{hu}^{(2)}_i &=& B_i^{(2)} \circ \breve{\prod}_{j=1}^{n_1} 
	T_{w^{(2)}_j} \cdot \tilde{hu}^{(1)}_j \\ 
	\tilde{hu}^{(2)}_i &=&  \tilde{hn}^{(2)}_i z^{(1)}
	\label{appen_eq:huhn}
	\end{eqnarray}
	Using Eq. (\ref{appen_eq:huhn}), it can be shown that,
	\begin{equation}
	\tilde{hu}^{(l)}_i = \tilde{hn}^{(l)}_i \prod^{(l-1)}_{j=1} z^{(j)}
	= \tilde{hn}^{(l)}_i Z^{(l-1)}
	\label{appen_eq:huhn2}
	\end{equation}
	Without loss of generality, assume that the output consists of one node,
	let $pn$ and $pu$ be the final normalized output of the networks with normalized 
	hidden layers and unnormalized hidden layers respectively. Using Eq. (\ref{appen_eq:huhn2}),
	we shall proof that $pn = pu$.
	\begin{eqnarray}
	\tilde{pn} & = & (B^{(L)}) \circ \breve{\prod}_{j=1}^{n_{L-1}} (T_{w_j^{(L)}} \cdot hn^{(L-1)}_j) 
	\label{appen_eq:pn}
	\\ \nonumber
	\tilde{pu} & = & 
	(B^{(L)}) \circ \breve{\prod}_{j=1}^{n_{L-1}} (T_{w_j^{(L)}} \cdot \tilde{hu}^{(L-1)}_j) \\
	& = & (B^{(L)}) \circ \breve{\prod}_{j=1}^{n_{L-1}} (T_{w_j^{(L)}} \cdot \tilde{hn}^{(L-1)}_j) Z^{(L-1)}
	\label{appen_eq:pu}
	\end{eqnarray}
	
	Eq. (\ref{appen_eq:pn}) and (\ref{appen_eq:pu}) show that $\tilde{pn}$ and 
	$\tilde{pu}$ differ by a constant
	scalar. Therefore, after normalizing, $pn=pu$.
\end{proof}

Although normalization is theoretically unnecessary,
normalization step in Eq. (\ref{appen_eq:p}) provides numerical stability
and prevents numeric under-flows and over-flows.

\begin{definition}
	A node in DRN is said to be an order $n$ node when it is connected with non-zero weights from $n$
	incoming nodes in the previous layer.
\end{definition}

\begin{lemma}
	For an order $n$ node, components of $\tilde{p}_{0}$ 
	(which we denote as $\tilde{p}_{0i}$, where $i=1,\cdots, q$ and $q$ is the distribution discretization size),
	follow a power law of
	$n$th order cross terms of the components of
	connecting nodes.
\end{lemma}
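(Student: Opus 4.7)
The plan is to prove the lemma by direct component-wise expansion of the vector identity for $\tilde{p}_0$ already given in Eq. (\ref{appen_eq:ptil}). The starting point is
\begin{equation*}
\tilde{p}_0 \;=\; B_0 \circ (T_{w_1} \cdot p_1) \circ (T_{w_2} \cdot p_2) \circ \cdots \circ (T_{w_n} \cdot p_n),
\end{equation*}
and the goal is to extract the $i$th entry and rewrite it in the factorized form of Eq. (\ref{eq:cross}).

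First I would apply the definition of the Hadamard product component-wise. Since $(u \circ v)_i = u_i v_i$, iterating this across the $n+1$ factors gives
\begin{equation*}
\tilde{p}_{0i} \;=\; (B_0)_i \,\prod_{\alpha=1}^{n} (T_{w_\alpha} \cdot p_\alpha)_i .
\end{equation*}
Next I would substitute the definition of each matrix-vector product using a distinct dummy summation index $j_\alpha$ for each factor to avoid index collisions, yielding
\begin{equation*}
(T_{w_\alpha} \cdot p_\alpha)_i \;=\; \sum_{j_\alpha} (T_{w_\alpha})_{i,j_\alpha}\, (p_\alpha)_{j_\alpha}.
\end{equation*}

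The third step is purely algebraic: interchange the product over $\alpha$ with the sums over $j_1,\dots,j_n$, which is legitimate because each sum is finite (length $q$) and each $j_\alpha$ appears in exactly one factor. This produces an $n$-fold sum over a single summand that factorizes cleanly into a coefficient piece depending only on $B_0$ and the transformation matrices, and a cross-term piece depending only on the activations:
\begin{equation*}
\tilde{p}_{0i} \;=\; (B_0)_i \sum_{j_1}\cdots\sum_{j_n} \underbrace{(T_{w_1})_{i,j_1}\cdots (T_{w_n})_{i,j_n}}_{\text{coefficients}} \;\underbrace{(p_1)_{j_1}\cdots (p_n)_{j_n}}_{\text{cross terms}},
\end{equation*}
which is exactly Eq. (\ref{eq:cross}). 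Finally I would remark that each summand is a product of exactly $n$ activation components, one drawn from each incoming node, so it is literally an $n$th-order cross term in the connecting nodes' components, and that the coefficients $(B_0)_i \prod_\alpha (T_{w_\alpha})_{i,j_\alpha}$ are strictly positive (being exponentials), so no cancellation collapses the order.

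There is no real obstacle here; the lemma is essentially a bookkeeping statement. The only subtlety worth being explicit about is the introduction of distinct dummy indices for the $n$ independent matrix-vector products before swapping sum and product, and observing that since the bias vector $B_0$ depends only on the free index $i$, it factors out of all the sums over $j_\alpha$. The compact rewriting of Eq. (\ref{eq:p2}) in terms of $c_{i,J^{(1)}}$ and $P_{J^{(1)}}$ is then just notational shorthand for the expression just derived.
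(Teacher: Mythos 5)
Your proposal is correct and follows essentially the same route as the paper's proof: write the $i$th component of the Hadamard product, expand each matrix--vector product with its own dummy index, and distribute the finite product over the sums to obtain the $n$-fold cross-term expansion of Eq.~(\ref{eq:cross}). Your added remarks on distinct dummy indices and positivity of the coefficients are harmless elaborations of the same bookkeeping argument.
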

\begin{proof} By rearranging the cross terms in the Hadamard product, we obtain the $n$th order cross terms from the components of the connecting nodes.
	\begin{eqnarray} \nonumber
	\tilde{p}_{0i} &=& {(B_0)}_i 
	\left( \sum_{j_1} (T_{w_1})_{i,j_1}(p_1)_{j_1} \right) \cdots
	\left( \sum_{j_n} (T_{w_n})_{i,j_n}(p_1)_{j_n} \right) \\ 
	&=& (B_0)_i 
	\sum_{j_1} \cdots \sum_{j_n}
	\underbrace{\left[ (T_{w_1})_{i,j_1} \cdots (T_{w_n})_{i,j_n} \right]}_\text{coefficients}
	\underbrace{\left[ (p_1)_{j_1} \cdots (p_n)_{j_n} \right]}_\text{cross terms} 
	\label{appen_eq:cross}
	\end{eqnarray}
\end{proof}
Writing in short hand notation, $\jid{1} = (j_1,\cdots j_n)$ where
the superscript indicates $J$ is the indices over the first layer.
Write
$\sum_{j_1} \cdots \sum_{j_n} = \sum_{\jid{1}}$ and consolidate the 
coefficients into a tensor, 
$\cid{i,\jid{1}} (w, B) = (B_0)_i (T_{w_1})_{i,j_1} \cdots (T_{w_n})_{i,j_n}$, and the cross terms into a tensor, 
$P_{\jid{1}} = (p_1)_{j_1} \cdots (p_n)_{j_n}$, where $w = (w_1,\cdots w_n)$.
Eq. (\ref{appen_eq:cross}) can be written compactly as,
\begin{equation}
\tilde{p}_{0i} = \sum_{\jid{1}} \cid{i,\jid{1}}(w,B) P_{\jid{1}}
\end{equation}
The normalization factor is,
\begin{equation}
| \tilde{p}_0 | = \sum_{\jid{1}} \sum_i \cid{i,\jid{1}} (w,B)  P_{\jid{1}} 
= \sum_{\jid{1}} \zid{\jid{1}} P_{\jid{1}}
\end{equation}
Finally, the normalized output, written in vector notation is,
\begin{equation}
p_{0} = \frac{\sum_{\jid{1}} \cid{\jid{1}} P_{\jid{1}}}
{\sum_{\jhd{1}} \zid{\jhd{1}} P_{\jhd{1}}}
\label{appen_eq:p2}
\end{equation}
where $p_{0} = (p_{01},p_{02},\cdots p_{0q})$ and 
$\cid{\jid{1}} = (\cid{1,\jid{1}},\cdots \cid{q,\jid{1}})$.

For a network with hidden layers, let $\hid{l}{1},\cdots \hid{l}{n_l}$
be the activation in the $l$th hidden layer. By Proposition 
\ref{appen_prop:hidden_norm}, we only consider unnormalized hidden layers.
The activation on the
first hidden layer can be written using Eq. (\ref{appen_eq:p2}) without
the normalization factor.
Using $\jidd{1}{\alpha}$ to denote the indices connecting to
$\hid{1}{\alpha}$, 
\begin{equation}
\hid{1}{\alpha} = \sum_{\jidd{1}{\alpha}} 
\cid{\jidd{1}{\alpha}}(w^{(1)}_\alpha, B^{(1)}_\alpha) P_{\jidd{1}{\alpha}}
\end{equation}
For the nodes in the second hidden layer, $\hid{2}{\alpha}$,
\begin{equation}
\hid{2}{\alpha} = \sum_{\jidd{2}{\alpha}} 
\cid{\jidd{2}{\alpha}}(w^{(2)}_\alpha, B^{(2)}_\alpha) H_{\jidd{2}{\alpha}}
\label{eq:h2}
\end{equation}
\begin{eqnarray} \nonumber
H_{\jidd{2}{\alpha}} &=& 
(\hid{1}{1})_{j_1} (\hid{1}{2})_{j_2} \cdots (\hid{1}{n_1})_{j_{n_1}} \\ \nonumber
& = & 
\left( \sum_{\jidd{1}{1}} 
\cid{j_1,\jidd{1}{1}} P_{\jidd{1}{1}}\right)
\cdots
\left( \sum_{\jidd{1}{n_1}} 
\cid{j_{n_1},\jidd{1}{n_1}} P_{\jidd{1}{n_1}}
\right)\\
& = & 
\prod_{\beta=1}^{n_1}
\left( \sum_{\jidd{1}{\beta}} 
\cid{j_{\beta},\jidd{1}{\beta}} P_{\jidd{1}{\beta}} \right)
\label{appen_eq:H}
\end{eqnarray}
Each of the $P_{\jidd{1}{\beta}}$ consists of cross terms of the input
distributions to order
$n_0$ ($n_0$ is the number of input nodes). $H_{\jidd{2}{\alpha}}$ is a product of
$n_1$ $P_{\jidd{1}{\beta}}$'s, hence
$H_{\jidd{2}{\alpha}}$ will be cross terms of the input distributions to order
$n_1 \times n_0$.

For a network of $L$ hidden layers with number of nodes, $n_1,n_2, \cdots n_L$,
the output consist of multiplications of the components of input distributions to
the power of $n_0 \times n_1 \cdots \times n_L$. In this way, DRN can fit high order functions
exponentially quickly by adding hidden layers.

\begin{proposition}
	For a node of order $n$, in the limit of small weights $|w_\alpha| \ll 1$
	for $\alpha=1,\cdots n$, the output activions, ${p}_{0}$
	can be approximated as a fraction of two
	linear combinations of the activations in the input nodes. 
	\label{appen_prop:linear}
\end{proposition}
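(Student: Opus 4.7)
The plan is to Taylor-expand each factor $T_{w_\alpha} \cdot p_\alpha$ appearing in Eq.~(\ref{appen_eq:ptil}) around $w_\alpha = 0$, use the fact that each $p_\alpha$ is a normalized probability vector to collapse the zeroth-order piece of the Hadamard product into the all-ones vector, and finally renormalize.

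First I would expand element-wise. Since $(T_{w_\alpha})_{qr} = \exp(-w_\alpha((s_q-s_r)/\Delta)^2)$, Taylor's theorem gives $(T_{w_\alpha})_{qr} = 1 + \mathcal{E}(w_\alpha)_{qr} + O(w_\alpha^2)$, where $\mathcal{E}(w_\alpha)$ is the matrix with entries $-w_\alpha((s_q-s_r)/\Delta)^2$, linear in $w_\alpha$. Applying this to $p_\alpha$ and invoking $\sum_r (p_\alpha)_r = 1$ yields
\[ (T_{w_\alpha} \cdot p_\alpha)_q = 1 + (\mathcal{E}(w_\alpha) \cdot p_\alpha)_q + O(w_\alpha^2), \]
so each factor in the Hadamard product is a small perturbation of the all-ones vector $e$. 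This is the key simplification: without normalization of $p_\alpha$ the zeroth-order term would depend on the inputs, but because $p_\alpha$ sums to one the constant term collapses to a single $1$.

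Next I would multiply out the $n$-fold Hadamard product. Using the elementary identity $\prod_\alpha (1 + \epsilon_\alpha) = 1 + \sum_\alpha \epsilon_\alpha + O(\epsilon^2)$ componentwise, the product reduces to $e + \sum_{\alpha=1}^n \mathcal{E}(w_\alpha) \cdot p_\alpha + O(w^2)$ to leading order. Taking the Hadamard product with $B_0$ then gives
\[ \tilde{p}_0 = B_0 + \sum_{\alpha=1}^n B_0 \circ (\mathcal{E}(w_\alpha) \cdot p_\alpha) + O(w^2), \]
and dividing by $|\tilde{p}_0| = \sum_j \tilde{p}_{0j}$ produces the fractional expression announced in the paper. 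Because $B_0$ is fixed and independent of the $p_\alpha$, both numerator and denominator are affine (linear-plus-constant) in the input activations, which establishes the claim that $p_0$ is a ratio of two linear combinations of the inputs.

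The main step requiring care is controlling the neglected $O(w^2)$ remainders uniformly in the discretization index $q$. Since the supports are bounded so that $((s_q-s_r)/\Delta)^2 \le 1$, and each $p_\alpha$ is a probability vector with $L^1$-norm one, the Taylor remainder from each factor is bounded by a constant multiple of $w_\alpha^2$ uniformly in $q$; the cross-terms generated by the Hadamard product are then all second order in $\max_\alpha |w_\alpha|$. Provided $B_0$ has strictly positive entries (which holds automatically from the Boltzmann form of Eq.~(\ref{eq:energy})), the denominator $|\tilde{p}_0|$ stays bounded away from zero, so the renormalization step does not amplify these errors. This is essentially a routine perturbation argument, with the only substantive ingredient being the collapse of the zeroth-order term via normalization of the inputs.
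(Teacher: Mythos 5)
Your proposal is correct and follows essentially the same route as the paper's proof: expand $T_{w_\alpha} = E + \mathcal{E}_{w_\alpha} + \mathcal{O}(w_\alpha^2)$, use the normalization $\sum_r (p_\alpha)_r = 1$ to collapse the zeroth-order factor to the all-ones vector, distribute the Hadamard product keeping only terms linear in the weights, and renormalize. Your closing remarks on uniform control of the $\mathcal{O}(w^2)$ remainder and on the denominator staying bounded away from zero are extra rigor the paper omits, but they do not change the argument.
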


\begin{proof}
	In the limit of small weights, one can expand $T_{w_\alpha}$ and keep the
	terms linear in $w_\alpha$. Then $T_{w_\alpha}$ is of the form
	\begin{eqnarray}
	T_{w_\alpha} & = & E + \mathcal{E}_{w_\alpha} + \mathcal{O}(w_\alpha^2)
	\end{eqnarray}
	where $E$ is a matrix with all ones in its elements,
	$\mathcal{O}(w_\alpha^2)$ is a matrix with elements of the order of
	$w_\alpha^2$ and $\mathcal{E}_{w_\alpha}$ is a matrix linear in $w_\alpha$ given by,
	\begin{equation}
	\mathcal{E}_{w_\alpha} = \left(
	\begin{array}{ccc}
	0            &-w_\alpha/\Delta^2 & -4 w_\alpha/ \Delta^2 \cdots \\
	-w_\alpha/\Delta^2 & 0            &  -w_\alpha/ \Delta^2 \cdots \\
	\vdots & &
	\end{array} \right)
	\end{equation}
	Using Eq.(\ref{appen_eq:ptil}) and dropping higher order terms,
	\begin{equation}
	\tilde{p}_0 \approx B_0 \circ ((E+\mathcal{E}(w_1)) \cdot p_1) 
	\circ ((E+\mathcal{E}(w_2)) \cdot p_2)
	\circ \cdots 
	\circ ((E+\mathcal{E}(w_n))\cdot p_n) \\ \nonumber
	\end{equation}
	Expanding and using the distributive property of Hadamard product
	\citep{million2007hadamard}, then dropping higher order terms
	in $w_i$ again,
	\begin{equation}
	\tilde{p}_{0i} \approx B_{0i} + 
	\sum_\alpha B_{0i} \circ (\mathcal{E}(w_\alpha) \cdot p_\alpha)_i 
	\end{equation}
	
	Upon normalization,
	\begin{equation}
	p_{0i} \approx \frac{B_{0i} + 
		\sum_\alpha B_{0i} \circ (\mathcal{E}(w_\alpha) \cdot p_\alpha)_i }
	{\sum_j  \left[ B_{0j} + \sum_{\alpha'} B_{0j} \circ (\mathcal{E}(w_{\alpha'}) \cdot p_{\alpha'})_j \right]}
	\end{equation}
\end{proof}

The consequence of proposition \ref{appen_prop:linear} is that by adjusting
the weights, DRN can approximate the unnormalized
output distribution to be a fraction of linear combinations of the input distribution. 

Indeed, the matrix $T_{w_\alpha}$ can be approximated by expanding
to $K$ orders in $w_\alpha$ 
with accuracy of expansion depending on  the magnitudes of $w_\alpha$. If
expansion is up to second order in $w_\alpha$ 
then the output is a fraction of quadratic expressions. If
the expansion in $w_\alpha$ is up to $K$ order then the resulting output is a fraction of polynomials of
$K$ order. At this point we wish to mention DRN's analogy to the well known Pad\'e approximant
\citep{baker1996pade}. Pad\'e approximant is a method of function approximation using fraction of
polynomials.

\section{Experimental details}\label{sect:appen_expt}
\begin{figure}
	\centering
	\includegraphics[width=0.5\columnwidth]{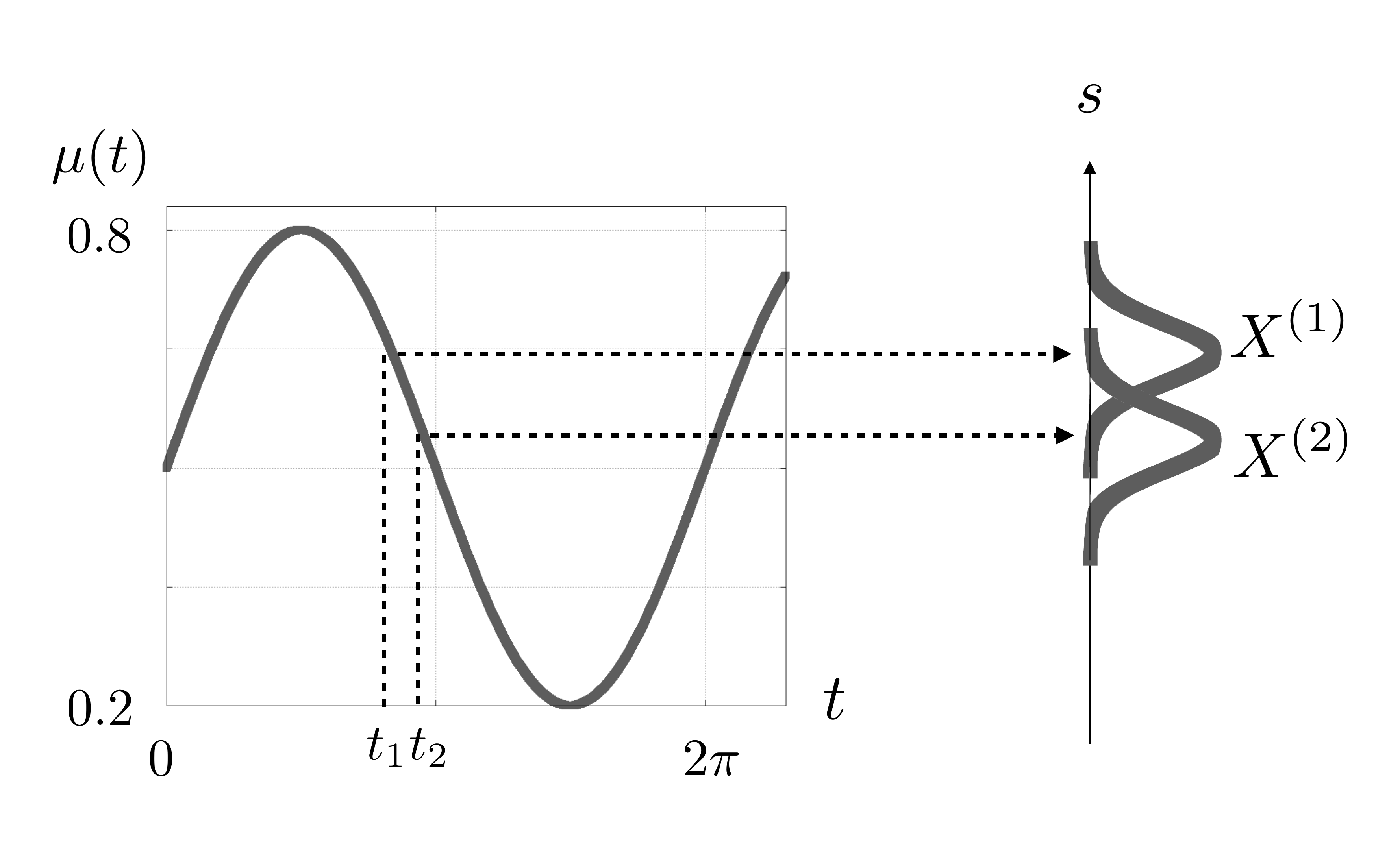}
	\caption{}
	\label{fig:sinegauss}
\end{figure}
\subsection{Shifting Gaussian}\label{sect:appen_shiftgauss}
We track a Gaussian distribution whose mean varies in the range $[0.2,0.8]$ sinusoidally over time
while the variance is kept constant at 0.1 (see Figure \ref{fig:sinegauss}). Given a few consecutive input distributions taken from
time steps spaced $\Delta t = 0.2$ apart, we predict the next time step distribution. For each data
we randomly sample the first time step from $[0, 2\pi]$. The distributions are truncated with
support of $[0,1]$ and discretized with $q$=100 bins. We found that for all methods, a history
length of 3 time steps is optimal. Following \citet{oliva2014fast} the regression performance is
measured by the $L_2$ loss, where lower $L_2$ loss is favorable. Table
\ref{table:appen_shiftgauss_model} shows the detailed network architectures used, for training size
of 20. $q = 100$ was used for the discretization of the distributions.

\begin{table}[h!]
	\centering
	\small
	Comparison of models tuned for best validation set result (Shifting Gaussian dataset, 20
	training data)
	\begin{tabular}{@{}cccccc@{}}  \hline
		& Test $L_2 (10^{-2})$  & $T$ &  Model description & $N_{p}$   & Cost function    \\  \hline
		DRN    & 4.90(0.46) & 3 & 3 - 2x10 - 1& 224   & JS divergence  \\
		RDRN   & \textbf{4.55(0.42)} &3&  T = 3, m = 5& 59  & JS divergence \\
		MLP    & 10.32(0.41) & 3 & 300 - 1x3 - 100& 1303  & MSE   \\
		RNN    & 17.50(0.89) &3  & T = 3, m = 10 & 2210   & MSE   \\
		3BE    & 22.30(1.89) & 3  & 30 basis functions, 20k RKS features & 6e+5  & $L_2$ loss   \\ \hline
	\end{tabular}
	\caption{Regression resutls for the shifting Gaussian dataset, with descriptions of the models.
		$L_2$ denotes the $L_2$ loss, $T$ is the optimal number of input time steps and $N_{p}$ is the
		number of model parameters used, MSE represents the mean squared error. A discretization of $q=100$
		is used for the distributions. For RDRN and RNN, $m$ is the number of nodes in the hidden state of
		each time step. For DRN and MLP (feedforward networks), the architecture is denoted as such: Eg. 3 -
		2x10 - 1: 3 input nodes, with 2 fully-connected hidden layers each with 10 nodes, and 1 output
		node.}
	\label{table:appen_shiftgauss_model}
\end{table}

\subsection{Climate Model}\label{sect:appen_climateOU}
With the long-term mean set at zero, the pdf has a mean of $\mu(t) = y\exp(-\theta t)$ and variance
of $\sigma^2(t) = D(1-e^{-2\theta t})/\theta$. $t$ represents time, $y$ is the initial point mass
position, and $D$ and $\theta$ are the diffusion and drift coefficients respectively. The diffusion
and drift coefficients are determined from real data measurements \citet{oort1971atmospheric}: $D =
0.0013,$  $\theta = 2.86$, and each unit of time corresponds to 55 days \citep{lin1987nonlinear}.
To create a distribution sequence, we first sample $y \in [0.02,0.09]$. For each $y$, we generate 6
Gaussian distributions at $t_0-4\delta$, $t_0-3\delta$,  ..., $t_0$ and $t_0+0.02$, with $\delta
= 0.001$ and $t_0$ sampled uniformly from $[0.01,0.05]$. The Gaussian distributions are truncated
with support of $[-0.01, 0.1]$. The regression task is as follows: Given the distributions at
$t_0-4\delta$, $t_0-3\delta$,  ..., $t_0$, predict the distribution at $t_0+0.02$.  With
different sampled values for $y$ and $t_0$, we created 100 training and 1000 test data. Table
\ref{table:appen_climateOU_model} shows the detailed network architectures used, for training size
of 100. $q = 100$ was used for the discretization of the distributions. 

\begin{table}[h!]
	\centering
	\small
	Comparison of models tuned for best validation set result (Climate model dataset, 100 training
	data)
	\begin{tabular}{@{}cccccc@{}}  \hline
		& Test $L_2 (10^{-2})$  & $T$ &  Model description & $N_{p}$  & Cost function    \\  \hline
		DRN     & 12.27(0.34) & 3  & 3 - 1x5 - 1& 44   & JS divergence \\
		RDRN   & \textbf{11.98(0.13)} & 5  & T = 5, m = 5 & 59 & JS divergence \\
		MLP     & 13.52(0.25) & 3   & 300 - 2x50 - 100 & 22700 & MSE  \\
		RNN     & 13.29(0.59) & 5 & T = 5, m = 50 & 12650  & MSE \\
		3BE     & 14.18(1.29) & 5  & 11 basis functions, 20k RKS features & 2.2e+5 & $L_2$ loss \\ \hline
	\end{tabular}
	\caption{Regression results for the climate model dataset, with descriptions of the models. $L_2$
		denotes the $L_2$ loss, $T$ is the optimal number of input time steps and $N_{p}$ is the number of
		model parameters used, MSE represents the mean squared error. A discretization of $q=100$
		is used for the distributions. For RDRN and RNN, $m$ is the number of nodes in the hidden state of
		each time step. For DRN and MLP (feedforward networks), the architecture is denoted as such: Eg. 3 -
		2x10 - 1: 3 input nodes, with 2 fully-connected hidden layers each with 10 nodes, and 1 output
		node.}
	\label{table:appen_climateOU_model}
\end{table}

\subsection{CarEvolution data}\label{sect:appen_cars}
The dataset consists of 1086 images of cars manufactured from the years 1972 to 2013. We split
the data into intervals of 5 years (i.e. 1970-1975, 1975-1980,~$\cdots$, 2010-2015) where each
interval has an average of 120 images. This gives 9 time intervals and for each interval, we create
the data distribution from the DeCAF (fc6) features \citep{donahue2014decaf} of the car images using
kernel density estimation. The DeCAF features have 4096 dimensions. Performing accurate density
estimation in very high dimensions is challenging due to the curse of dimensionality
\citep{gu2013nonparametric}. Here we make the approximation that the DeCAF features are independent,
such that the joint probability is a product of the individual dimension probabilities. The first 7 intervals were
used for the train set while the last 3 intervals were used for the test set, giving 5 training
and 1 test data. Table~\ref{table:appen_cars_model} shows the detailed network architectures used for the CarEvolution
dataset, for training size of 5. $q = 100$ was used for the discretization of the distributions.
Figure \ref{fig:carprobsamples} shows some samples of the distributions formed from the CarEvolution
dataset. The distributions' shapes are much more varied than simple Gaussian distributions.

\begin{table}[h!]
		\caption{Regression results for the CarEvolution dataset, with descriptions of the models. NLL
		denotes the negative log-likelihood, $T$ is the optimal number of input time steps and $N_{p}$ is
		the number of model parameters used, MSE represents the mean squared error. A discretization of
		$q=100$ is used for the distributions. For RDRN and RNN, $m$ is the number of nodes in the hidden
		state of each time step. For DRN and MLP (feedforward networks), the architecture is denoted as
		such: Eg. 3 - 2x10 - 1: 3 input nodes, with 2 fully-connected hidden layers each with 10
		nodes, and 1 output node.}
	\label{table:appen_cars_model}
	\centering
	\small
	Comparison of models tuned for best validation set result (CarEvolution dataset, 5 training data)
	\begin{tabular}{@{}cccccc@{}}  \hline
		& Test NLL  & $T$ &  Model description & $N_{p}$  & Cost function    \\  \hline
		DRN & 3.9663(2e-5)& 2 & (4096x2) - 1x1 - 4096 & 28676  & JS divergence   \\
		RDRN  & \textbf{3.9660(3e-4)}  & 2 & T = 2, m = 3 & 12313    & JS divergence \\
		MLP & 3.9702(6e-4) & 2& (4096x2x100) - 3x10 - (4096x100)& 1.2e+7   & MSE \\
		3BE & 3.9781(0.003) & 2 & 15 basis functions, 200 RKS features & 1.2e+7   & $L_2$ loss  \\
		EDD & 4.0405 & 1 & 8x8 kernel matrix for 8 training data & 64  & MSE \\  \hline
	\end{tabular}
\end{table}

\begin{figure}[h!]
	\centering
	\includegraphics[width=0.4\linewidth]{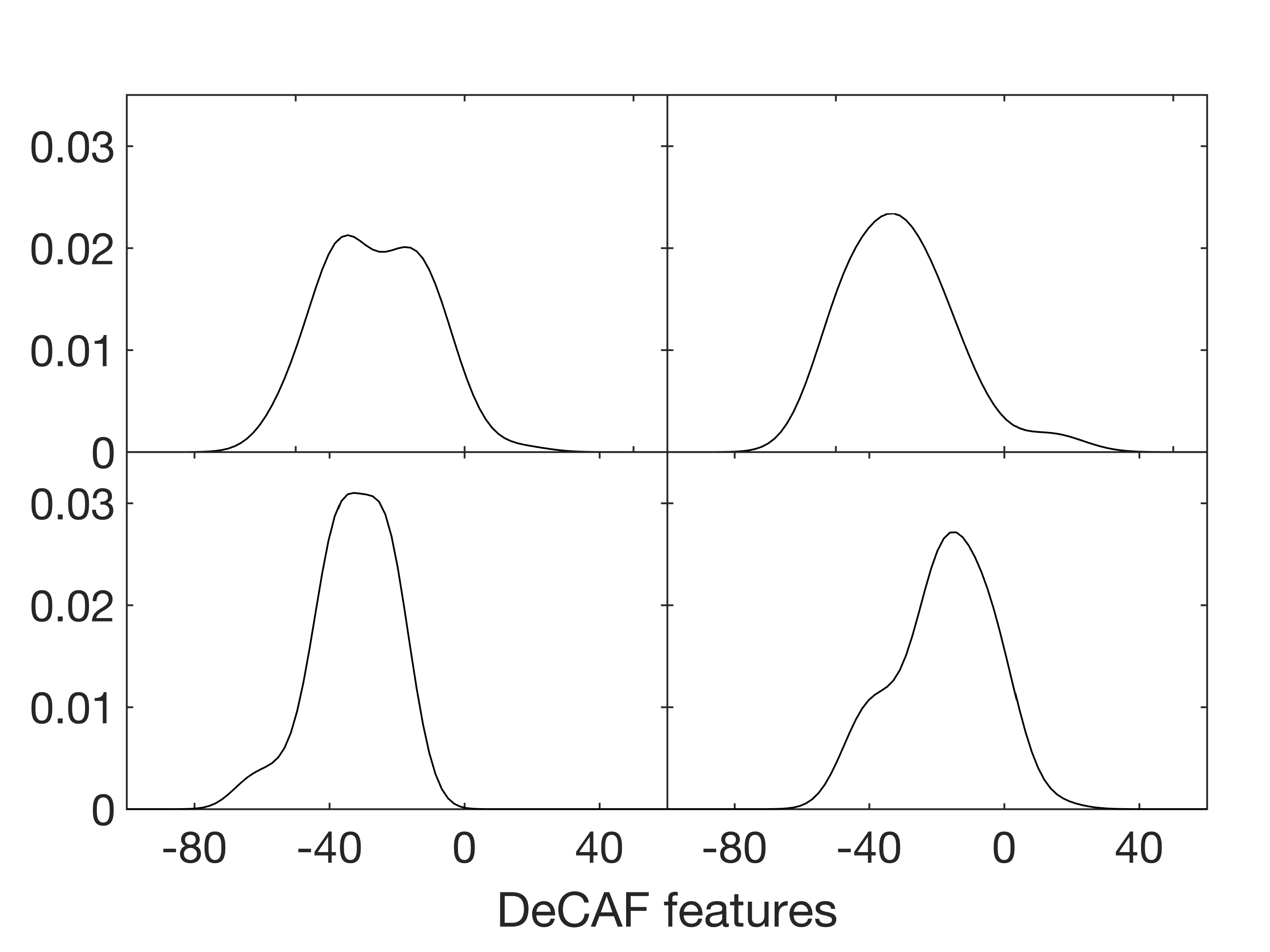}
	\caption{Samples of the DeCAF features \citep{donahue2014decaf} distributions for the CarEvolution
		dataset, showing varied distribution shapes.}
	\label{fig:carprobsamples}
\end{figure}

\subsection{Stock prediction}\label{sect:appen_stock}
We used 5 years of daily returns from 2011 to 2015 and performed exponential window averaging on the
price series \citep{murphy1999technical}. We also used a sliding-window training scheme
\citep{kaastra1996designing} to account for changing market conditions (details are in
\ref{sect:appen_stock}). The daily stock returns are the logarithmic returns. We used a sliding
window scheme where a new window is created and the model retained after every 300 days (the size
of the test set). For each window, the previous 500 and 100 days were used for training and
validation sets respectively. Table \ref{table:appen_stock_model} shows the detailed network
architectures used. $q = 100$ was used for the discretization of the distributions. The RDRN
architecture used is shown in Figure \ref{fig:stock_rdrn_nw}, where the data input consists past 3
days of distribution returns and one layer of hidden states with 3 nodes per time step is used.
Figure \ref{fig:stockprobsamples} shows some samples of the distributions formed from the stock
dataset. The distribution shapes are much more varied than simple Gaussian distributions. 

\begin{table}[h!]
			\caption{Regression results for the stock dataset, with descriptions of the models. NLL denotes the
		negative log-likelihood, $T$ is the optimal number of input time steps and $N_{p}$ is the number of
		model parameters used, MSE represents the mean squared error. A discretization of $q=100$
		is used for the distributions. For RDRN and RNN, $m$ is the number of nodes in the hidden state of
		each time step. For DRN and MLP (feedforward networks), the architecture is denoted as such: Eg. 3 -
		2x10 - 1: 3 input nodes, with 2 fully-connected hidden layers each with 10 nodes, and 1 output
		node.}
		\label{table:appen_stock_model}
	\centering
	\small
	Comparison of models tuned for best validation set result (Stock dataset, 200 training data)
	\begin{tabular}{@{}ccccccc@{}}  \hline
		& \multicolumn{2}{c}{Test NLL} & & &  \\
		&  1 day     & 10 days  & $T$ & Model description & $N_{p}$ & Cost function \\  \hline
		DRN  & \textbf{-473.93(0.02)} & -458.08(0.01) & 1 & No hidden layer & 9 & JS divergence      \\
		RDRN & -469.47(2.43)& \textbf{-459.14(0.01)} & 3 & T = 3, m = 3 & 37  & JS divergence  \\
		MLP & -471.00(0.04) & -457.08(0.98)& 3 &  (3x3x100) - 3x10 - 100& 10300 & MSE  \\
		RNN & -467.37(1.33) & -457.96(0.20)& 3 & T = 3, m = 10& 4210 & MSE  \\
		3BE & -464.22(0.16) & -379.43(11.8) & 1 &  14 basis functions, 1k RKS features & 14000  & $L_2$
		loss \\ \hline
	\end{tabular}
\end{table}

\begin{figure}
	\centering
	\includegraphics[width=0.6\linewidth]{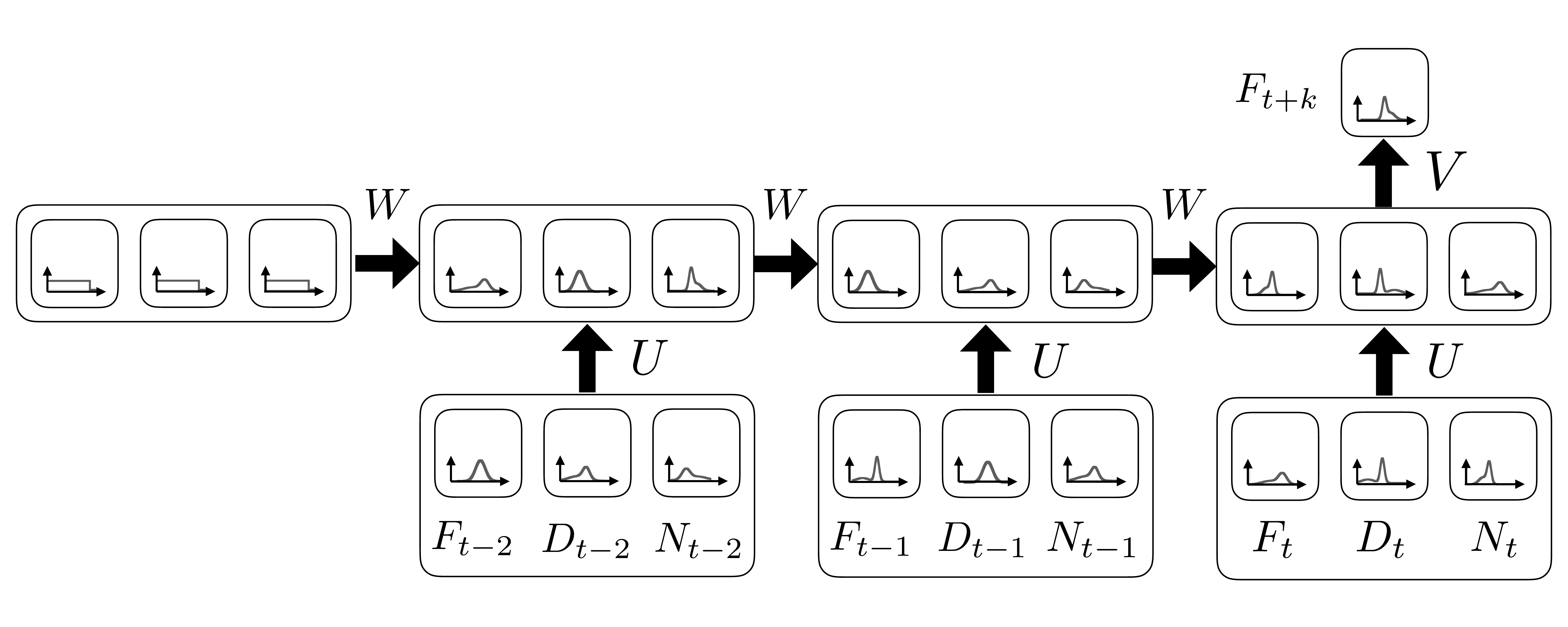}
	\caption{RDRN network for the stock dataset: past 3 days of distribution of returns of constituent
		companies in FTSE, DOW and Nikkei were used as inputs, to predict the future distribution of returns
		for constituent companies in FTSE. One layer of hidden states is used, with 3 nodes per hidden
		state.}
	\label{fig:stock_rdrn_nw}
\end{figure}

\begin{figure}[h!]
	\centering
	\begin{subfigure}[b]{0.6\columnwidth}
		\includegraphics[width=\columnwidth]{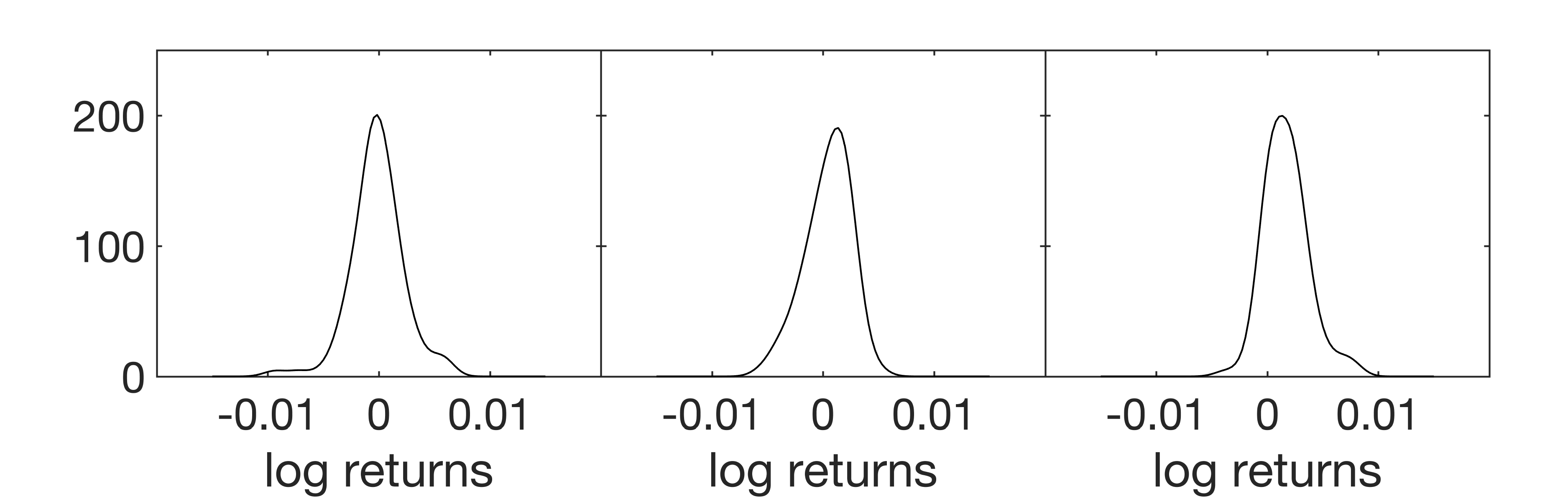}
		\caption{FTSE}
	\end{subfigure}
	\begin{subfigure}[b]{0.6\columnwidth}
		\includegraphics[width=\columnwidth]{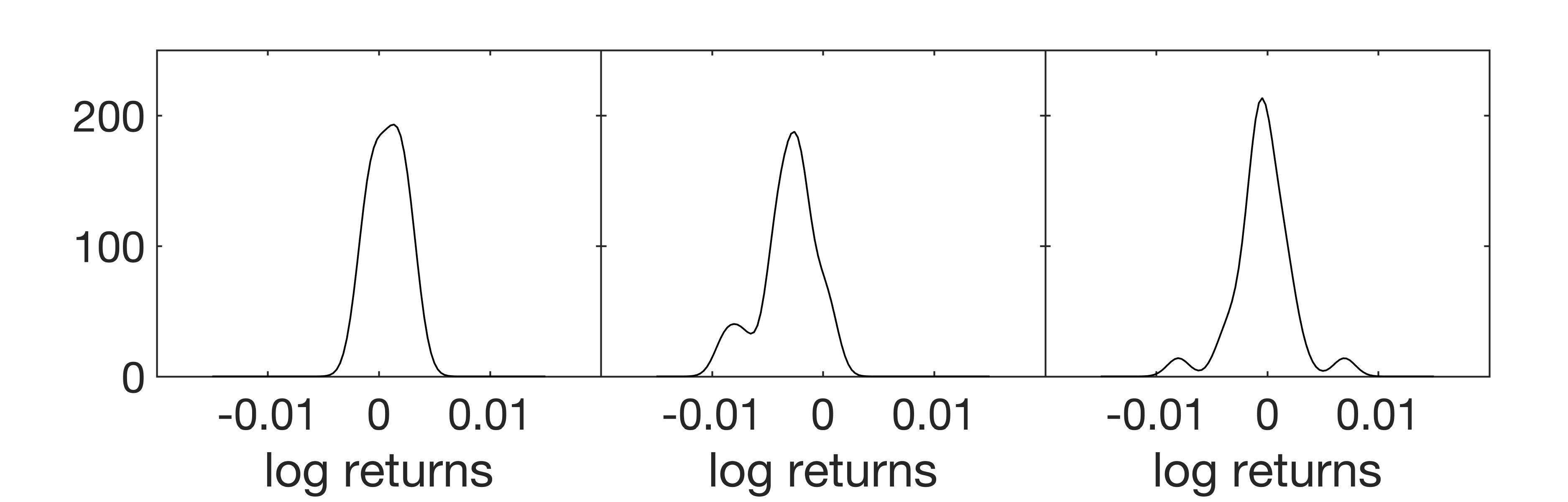}
		\caption{DOW}
	\end{subfigure}
	\begin{subfigure}[b]{0.6\columnwidth}
		\includegraphics[width=\columnwidth]{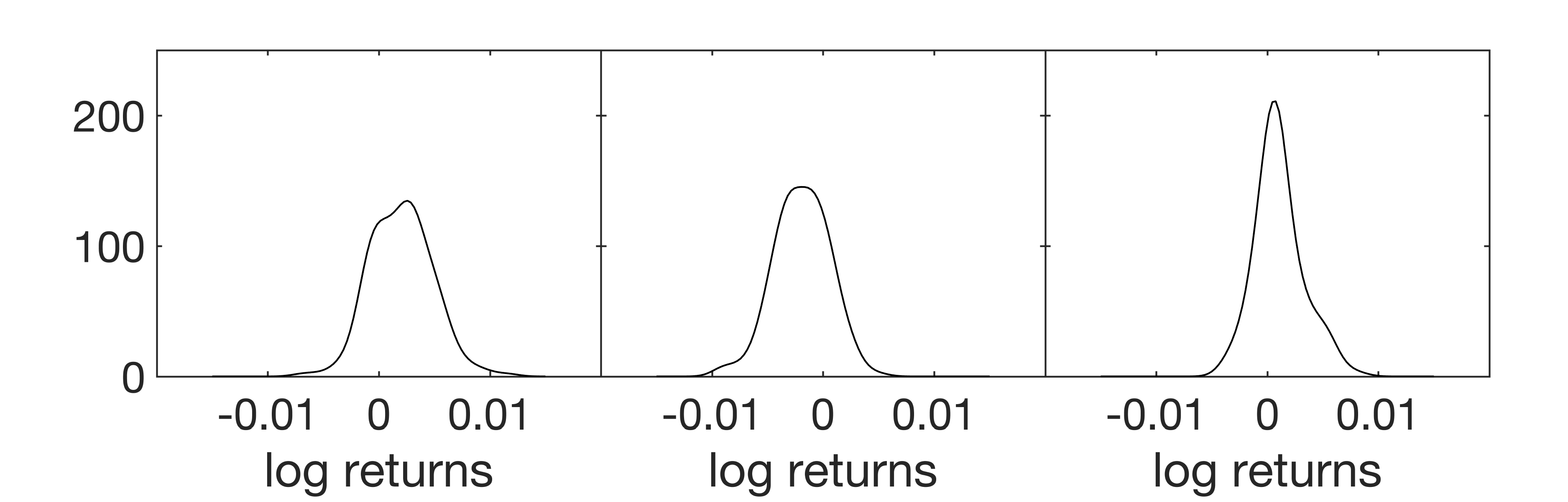}
		\caption{Nikkei}
	\end{subfigure}
	\caption{Samples of the data distributions formed from (a) FTSE, (b) DOW and (c) Nikkei constitutent
		companies' log returns. }
	\label{fig:stockprobsamples}
\end{figure}

\end{document}